\def\Hy@Warning#1{}\makeatother
\begin{document}
\title{Offline Imitation Learning by Controlling the Effective Planning Horizon}
%
%
\author{Hee-Jun Ahn$^*$\inst{1}\orcidID{0009-0007-0389-7270} \and
Seong-Woong Shim$^*$\inst{1}\orcidID{0009-0006-5743-4485} \and
Byung-Jun Lee\inst{1,2}\orcidID{0000-0002-0684-607X}}
\authorrunning{HJ. Ahn et al.}
%
\institute{Korea University, Seoul, South Korea \and
Gauss Labs, Seoul, South Korea\\
\email{\{niwniwniw, ssw030830, byungjunlee\}@korea.ac.kr}}
\maketitle              
\def\thefootnote{*}\footnotetext{These authors contributed equally to this work.}

\begin{abstract}
In offline imitation learning (IL), we generally assume only a handful of expert trajectories and a supplementary offline dataset from suboptimal behaviors to learn the expert policy. While it is now common to minimize the divergence between state-action visitation distributions so that the agent also considers the future consequences of an action, a sampling error in an offline dataset may lead to erroneous estimates of state-action visitations in the offline case. In this paper, we investigate the effect of controlling the effective planning horizon (i.e., reducing the discount factor) as opposed to imposing an explicit regularizer, as previously studied. Unfortunately, it turns out that the existing algorithms suffer from magnified approximation errors when the effective planning horizon is shortened, which results in a significant degradation in performance. We analyze the main cause of the problem and provide the right remedies to correct the algorithm. We show that the corrected algorithm improves on popular imitation learning benchmarks by controlling the effective planning horizon rather than an explicit regularization.
\keywords{offline imitation learning \and supplementary offline dataset.}
\end{abstract}

\section{Introduction}
\label{introduction}
Imitation learning (IL) is one of the sequential decision making problem settings that aims to solve a task from expert demonstrations instead of explicit notions of utility~\cite{pomerleau1991efficient,ng2000algorithms}. In a standard setting of IL, the agent is only given with a few number of expert trajectories, and it performs the imitation of the expert by interacting with the environment. We denote this setting as online imitation learning since it is possible to query the consequence of action during the learning with the online interactions. On the other hand, it has been recently questioned about the practicality of online interactions, and a setting called offline imitation learning that aims to learn from supplementary suboptimal dataset instead of interactions was proposed~\cite{kim2022lobsdice,kim2021demodice,ma2022versatile}.

As a basic approach for IL, one can come up with behavior cloning (BC)~\cite{pomerleau1991efficient}, which treats the problem as a supervised learning to map state to action based on expert demonstrations. However, BC does not make a use of given information other than expert demonstrations, and suffers from \textit{compounding error} by covariate shift as it gets off track~\cite{ross2011reduction}. To address this issue, previous studies~\cite{ho2016generative,kostrikov2018discriminator,kostrikov2019imitation} have proposed to imitate an expert policy by matching \textit{state-action visitation distributions}, which can be estimated by interacting with the environment. State-action visitation matching allows the agent to learn how to return to the preferred states, and this approach has been successful in a wide range of domains of online IL.

Interestingly, in recent offline IL studies, it is often found that BC is quite competitive~\cite{li2022rethinking,kim2021demodice}, and sometimes it even performs better than visitation distribution matching algorithms depending on the experiment settings. This is because there is an inherent error in estimating visitation distributions from a finite dataset in offline cases, and BC, which completely ignores the dynamics information, does not suffer from this problem. Therefore, unlike the online cases where we can sample from true environment dynamics, BC or visitation distribution matching cannot always be the superior choice over the other in offline cases.

Motivated by this, we consider controlling the effective planning horizon, i.e. the discount factor $\gamma$, for the offline imitation learning problems. The discount factor implies how important the future is compared to the present. For visitation distribution matching algorithms, it can be interpreted as the amount of state-action distribution difference between the learned and expert policy we allow in the future compared to now. While it has been overlooked in the online IL studies since we recover more accurate policy as we less discount the future, it will be advantageous to use an optimal discount factor in offline IL that makes the best trade-offs:
using small $\gamma$ to shorten the effective planning horizon helps to become robust to the errors in the inferred dynamics~\cite{janner2019trust} (which also allows avoiding explicit regularization for robustness as in previous studies), whereas using large $\gamma$ makes an agent less prone to the compounding errors by considering longer consequences of training.

In this paper, we start by formally analyzing the performance trade-off of a discount factor in offline IL. It turns out that, however, recently proposed offline IL algorithms show pathological behavior when the discount factor is lowered and naively controlling the discount factor does not lead to a performance gain. It is because the error from the approximations they share depends on the discount factor, and it becomes revealed and maximized as we lower $\gamma$. To this end, we propose a simple technique called Inverse Geometric Initial state sampling (IGI) to address the problem. We show that IGI enables the algorithm to properly learn in the low discount factor settings, and by tuning the discount factor, the offline IL agent with IGI outperforms previous state-of-the-art algorithms with explicit regularizations.
\section{Background}
\label{Background}

\subsection{Markov Decision Process (MDP)}
\label{mdp}
We consider environments modeled as a Markov Decision Process (MDP), which is defined by a tuple $\mathcal{M}$ = ($\mathcal{S},\,\mathcal{A},\,P,\,p_0,\,R,\,\gamma$). Here, $\mathcal{S}$ is the state space, and $\mathcal{A}$ is the action space. $P$ and $p_0$ represent the dynamics and initial state distribution, respectively. $R$ is the reward function which is assumed to be bounded in $[0, R_{\max}]$, and $\gamma\in[0,1)$ is the discount factor. A policy $\pi(\cdot\vert s)$ determines the probability of agent's action in a state $s$. The goal of an agent is to maximize the sum of reward discounted by $\gamma$. The state-action visitation distribution $d^\pi(s,a)$, which is induced by a policy $\pi$, is defined as
\begin{equation*}
   d^\pi(s,a) = (1-\gamma)\sum_{t=0}^\infty\gamma^t\textrm{Pr}(s_t=s,\,a_t=a
   \,\vert \,p_0,\,P,\,\pi).
\end{equation*}
where $s_0 \sim p_0(\cdot)$ and $a_t \sim \pi(\cdot\vert s_t ), s_{t+1}\sim P(\cdot\vert s_t,a_t)$ for 
all time step $t$. We consider an offline imitation learning problem where a small number of expert demonstrations ($\mathcal{D}^E$) and a suboptimal dataset ($\mathcal{D}^O$) consisting of transitions $(s, a, s^\prime)$ are given. We denote total dataset $\mathcal{D}^D=\mathcal{D}^E\cup \mathcal{D}^O$ as a union of two datasets. Especially, we assume that $\mathcal{D}^E$ is sampled by following the expert policy $\pi^E$ in underlying MDP $\mathcal{M}$. We denote empirical distribution of $\mathcal{D}^E$ and $\mathcal{D}^D$ as $E(s,a)$ and $D(s,a)$ respectively. On the other hand, we denote the state-action distribution of $\pi^E$ as $d^E(s,a)$.

\subsection{Imitation Learning via state-action visitation matching}
\label{imitation learning}
Imitation learning (IL) aims to train an agent that mimics the expert based on the expert demonstrations. In addition to demonstrations, supplementary information on environment dynamics is given, by directly interacting with environment (online IL) or by a dataset of suboptimal behaviors (offline IL). 
One simplest and most popular approach to imitate an expert is behavior cloning (BC), which treats IL as a supervised learning problem, ignoring any of supplementary information~\cite{pomerleau1991efficient}. The objective of BC can be represented as follows:
\begin{equation}
\label{BC}
\min_{\pi} -\mathbb{E}_{(s,a)\sim E}\left[\log{\pi(a\vert s)} \right].
\end{equation}
However, it is well known that the error of obtained policy induced by covariate shift compounds over time, and leads to an eventual failure unless $\mathcal{D}^E$ is large enough~\cite{ross2011reduction}. 

To address the weakness of BC, a state-action visitation distribution matching objective is now widely adopted, which minimizes the divergence between $d^\pi$ and $d^E$~\cite{kostrikov2019imitation,ke2020imitation}. The distribution matching objective can be presented as follows:
\begin{equation}
\label{distribution matching}
    \max_{\pi} -D_{\text{KL}}\left( d^\pi\Vert d^E\right)=\mathbb{E}_{(s,a)\sim d^\pi}\left[\log{\frac{d^E(s,a)}{d^\pi(s,a)}} \right].
\end{equation}
Note that the objective can be alternatively interpreted as a reinforcement learning (RL) problem:
\begin{equation}
\label{distribution matching to RL}
    \max_{\pi} -D_{\text{KL}}\left( d^\pi\Vert d^E\right)=(1-\gamma)\cdot\mathbb{E}_{\begin{subarray}{1}s_0\sim p_0,\\ a_t\sim \pi,\\ s_{t+1}\sim P\end{subarray}}\left[\sum_{t=0}^\infty\gamma^t\log{\frac{d^E(s_t,a_t)}{d^\pi(s_t,a_t)}} \right].
\end{equation}
We can observe that the objective (\ref{distribution matching to RL}) can be seen as an RL problem with a reward $r=\log{\frac{d^E}{d^\pi}}$. When online interactions are allowed, we can sample from $d^\pi$ by executing policy $\pi$, and \cite{ho2016generative} proposed to estimate the newly defined reward by learning the discriminator using samples from $d^E$ and $d^\pi$:
\begin{equation}
\label{discriminator}
    \max_{c:S\times A\rightarrow(0,1)}\mathbb{E}_{(s,a)\sim d^E}\left[\log{c(s,a)} \right]+\mathbb{E}_{(s,a)\sim d^\pi}\left[\log{(1-c(s,a))} \right],
\end{equation}
where the optimal discriminator $c^*(s,a)$ can be used to recover the reward as $\log{c^*(s,a)}-\log{(1-c^*(s,a))}=\log{\frac{d^E(s,a)}{d^\pi(s,a)}}$. On the contrary, in a fully offline setting we consider, the interaction with the environment to receive samples from $d^\pi$ is not allowed. A common choice in this case is to train a discriminator that can distinguish between the expert demonstrations and a supplementary suboptimal dataset~\cite{kim2021demodice,kim2022lobsdice,ma2022versatile}, and is explained in detail in Section \ref{main objective}. 

\section{Offline IL via state-action visitation matching}
\label{discount factor in offline RL}
In this section, we first derive a practical objective that can be used to optimize the distribution matching objective (\ref{distribution matching}) in the offline IL case. After the derivation of an objective that is used throughout the paper, we provide an analysis of the effect of the discount factor on the error bound of offline IL. In particular, we show that there is a trade-off when controlling the discount factor, unlike the online case.
\subsection{Derivation of an offline IL objective}
\label{main objective}
We start from the widely used visitation distribution matching objective (\ref{distribution matching}) as suggested in \cite{ho2016generative,torabi2018generative}. 
Rather than optimizing for the policy $\pi$, we follow the recent approaches that optimize directly for the visitation distribution $d^\pi$~\cite{lee2021optidice,kim2021demodice}. To optimize for $d^\pi$, we need to ensure that the solution we get is a valid visitation distribution. By making Bellman flow constraints and normalization constraints explicit, we have: 
\begin{align}
\label{3.2_1}
	\max_{d^\pi}\;&-D_{KL}(d^\pi\Vert d^E) \\
	\label{3.2_2}
    	\textrm{s.t}\; &\mathcal B_*d^\pi(s,a)=(1-\gamma)p_0(s)+\gamma\mathcal P_* d^\pi(s,a)\;\,\forall s,\\ \label{3.2_3}
        &\sum_{s,a}d^\pi(s,a)=1\,,\;d^\pi(s,a) \geq 0 \;\,\forall s,a,
\end{align}
where $\mathcal{B_*}d^\pi(s,a)=\sum_a d^\pi(s,a)$ and $\mathcal{P}_*d^\pi(s,a)=\sum_{\Bar{s},\Bar{a}}P(s\vert\allowbreak\Bar{s},\Bar{a})d^\pi(\Bar{s},\Bar{a})$ are a marginalization and an expectation over the previous state-actions, respectively. It can be easily seen that the following equalities hold by interchanging the order of summations:
\begin{equation*}
\begin{split}
    &\sum_s \nu(s)\mathcal{P}_*d^\pi(s,a) = \sum_{s,a}d^\pi(s,a) \mathcal{P}\nu(s),\;\textrm{where}\;\mathcal{P}\nu(s)=\sum_{s'} P(s'\vert s,a)\nu(s')\\
    &\sum_s \nu(s)\mathcal{B}_*d^\pi(s,a) = \sum_{s,a}d^\pi(s,a) \mathcal{B}\nu(s),\;\textrm{where}\;\mathcal{B}\nu(s)=\nu(s).
\end{split}
\end{equation*}
The Lagrangian of the constrained problem (\ref{3.2_1}-\ref{3.2_3}) is
\begin{align}
\begin{split}
\label{3.2_4}
    \max_{d^\pi\geq0}\min_{\nu,\lambda}&-\mathbb{E}_{s,a\sim d^\pi}\left[\log\frac{d^\pi(s,a)}{d^E(s,a)}\right] +\lambda\left[\sum_{s,a}d^\pi(s,a)-1\right]\\
    &+\sum_s \nu(s)((1-\gamma)p_0(s)+\gamma\mathcal P_*d^\pi(s,a)-\mathcal{B}_*d^\pi(s,a))
\end{split}
\end{align}
where $\lambda$ and $\nu(s)$ are Lagrange multipliers. Based on the relationships shown above, we can rewrite Equation (\ref{3.2_4}), where the summation is computed w.r.t. $d^\pi(s,a)$:
\begin{align}
\label{3.2_5}
\max_{d^\pi\geq0}\min\limits_{\nu,\lambda}(1-\gamma)\mathbb{E}_{s\sim p_0}[\nu(s)]+\sum_{s,a}d^\pi(s,a)\left[e_\nu(s)+\lambda-\log\frac{d^\pi(s,a)}{d^E(s,a)}\right]-\lambda.
\end{align}
where $e_\nu(s)=\gamma \mathcal{P}\nu(s)-\mathcal{B}\nu(s)$. In the second term of Equation (\ref{3.2_5}), to derive a practical algorithm, we want to make an expectation to be performed on the total dataset distribution $D(s,a)$. For this, we use the importance sampling with importance weight $\zeta(s,a)=\frac{d^\pi(s,a)}{D(s,a)}$. Then, the objective becomes
\begin{align}
\begin{split}
\label{3.2_6}
    \max_{\zeta\geq0}\min_{\nu,\lambda}
    &(1-\gamma)\mathbb{E}_{p_0}[\nu(s)]+\mathbb{E}_{D}\left[\zeta(s,a)\left(A_\nu(s,a)+\lambda-\log\zeta(s,a)\right)\right]-\lambda.\\&=:\mathcal{L}(\zeta,\nu,\lambda)
\end{split}
\end{align}
Here, $A_\nu(s,a) = e_\nu(s)+\log\frac{d^E(s,a)}{D(s,a)}$, which can be interpreted as an advantage when the action $a$ is executed in the state $s$ if we define the reward as a log ratio of distributions $r(s,a)=\log\frac{d^E(s,a)}{D(s,a)}$. Such a log ratio can be obtained by training a discriminator $c(s,a)$ through the following objective:
\begin{align}
\label{3.2_11}
    \max_{c:S\times A\rightarrow[0,1]} \;\mathbb{E}_{(s,a)\sim d^E}[\log c(s,a)]+\mathbb{E}_{(s,a)\sim D}[\log (1-c(s,a))],
\end{align}
such that the optimal discriminator $c^*(s,a)$ satisfies $\log\frac{d^E(s,a)}{D(s,a)} = \log \left(\frac{c^*(s,a)}{1-c^*(s,a)}\right)$. The advantage $A_\nu(s,a)$ can then be estimated based on the discriminator $c$ being trained. We can further simplify the optimization problem (\ref{3.2_6}) by noting that the strong duality holds, which enables us to change the order of optimization without affecting the optimal value. Closed-form solution of maximization of Equation (\ref{3.2_6}) can be obtained by solving first-order optimality condition $\frac{\partial\mathcal{L}(\zeta^*,\nu,\lambda)}{\partial\zeta} = 0$, which gives $\zeta^*=\exp{(A_\nu(s,a)+\lambda-1)}$. 

Similarly, $\lambda^*=-\log[\mathbb{E}_{(s,a)\sim D}\exp{(A_\nu(s,a)-1)}]$ can be obtained. By substituting in $\zeta^*$ and $\lambda^*$ to (\ref{3.2_6}), we derive the final objective:
\begin{align}
\label{3.2_9}
    \min_\nu \;(1-\gamma)\mathbb{E}_{p_0}[\nu(s)]+\log\mathbb{E}_{(s,a)\sim D}\left[\exp{\left(A_\nu(s,a)\right)}\right]=:\mathcal{L}(\zeta^*,\nu,\lambda^*).
\end{align}
Based on the optimized $\nu^*$, we can obtain $\zeta^*=\text{softmax}(A_\nu(s,a)-1) $. Using this, policy can be extracted through the following objective:
\begin{align}
\label{3.2_10}
    \min_\pi \;\mathbb{E}_{(s,a)\sim d^{\pi^*}}[\log\pi(a\vert s)]=\mathbb{E}_{(s,a)\sim D}\left[\zeta^*\log\pi(a\vert s)\right].
\end{align}
Note that the objective (\ref{3.2_9}) can be seen as a special case of DemoDICE~\cite{kim2021demodice} and SMODICE~\cite{ma2022versatile}, and it can be recovered by setting $\alpha=0$ in DemoDICE (no explicit regularization toward suboptimal dataset) or by setting $f$-divergence to be KL-divergence in SMODICE. While the algorithm in this paper is based on KL-divergence minimization of visitation distributions, note that it can be trivially extended to any $f$-divergence minimization case.

\subsection{Trade-off between two distinct effects in offline IL by discount factor}
\label{trade off by discount factor}
Now we show that there is indeed a trade-off when controlling the discount factor $\gamma$ in the offline setting. Intuitively, an offline agent can only receive supplementary information about environment dynamics through limited finite demonstrations, and the error of estimated dynamics is inevitable.
Therefore, we can expect that learning with a long planning horizon increases the risk of compounding estimation error. On the other hand, the trained agent is typically evaluated in a non-discounted environment by measuring the average reward it gets, and excessively lowering the discount factor for training will make a large train-test discrepancy.
The theorem below is a formal analysis backing up the argument, showing an error bound of the imitated policy with respect to a discount factor.

\begin{restatable}{theorem}{bound}
\label{theorem1}
Let $P$ an underlying transition dynamics and $\widehat{P}$ an estimated transition dynamics. $\gamma$ is a discount factor used for evaluating the policy and $\hat{\gamma}$ is a discount factor used for training the policy where $\hat{\gamma}\le\gamma$. Let $d^\pi_{P, \gamma}$ a state-action visitation distribution induced by $\pi$ under the dynamics $P$ using $\gamma$, and $d^E_{P, \gamma}$ a state-action visitation distribution induced by $\pi^E$ under the dynamics $P$ using $\gamma$. Assume that the reward is bounded in $[0, R_{\max}]$.
Then, the error bound for imitated policy $\pi$ is
\begin{align}
\begin{split}
\label{theorem1_1}
    \left\vert \mathbb{E}_{d^\pi_{P,\gamma}}[ r(s,a) ]-\mathbb{E}_{d^{E}_{P,\gamma}}[ r(s,a) ]\right\vert \le
    \frac{2R_{\max}}{1-\hat{\gamma}}\left(\frac{\gamma-\hat{\gamma}}{1-\gamma}
    +\frac{\hat{\gamma}\epsilon_P}{1-\hat{\gamma}}+\frac{\epsilon_\pi}{1-\hat{\gamma}}\right),
\end{split}
\end{align}
where, \\
$\epsilon_P=\mathbb{E}_{d^\pi_{\widehat{P},\hat{\gamma}}}\left[D_{TV}(\widehat{P}\Vert P)\right]+\mathbb{E}_{d^E_{\widehat{P},\hat{\gamma}}}\left[D_{TV}(\widehat{P}\Vert P)\right]$, and $\epsilon_{\pi}=\mathbb{E}_{d^\pi_{\widehat{P},\hat{\gamma}}}\left[ D_{TV}(\pi\Vert\pi^E)\right]$.
\end{restatable}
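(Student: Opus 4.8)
The plan is to bound the left-hand side of (\ref{theorem1_1}) by a telescoping chain of visitation distributions in which each consecutive pair differs in exactly one ingredient, and then to control every link by a change-of-measure estimate. Concretely I would insert intermediate terms along
\[
 d^\pi_{P,\gamma}\;\to\;d^\pi_{P,\hat\gamma}\;\to\;d^\pi_{\widehat P,\hat\gamma}\;\to\;d^E_{\widehat P,\hat\gamma}\;\to\;d^E_{P,\hat\gamma}\;\to\;d^E_{P,\gamma},
\]
so that successive distributions differ only in the discount factor ($\gamma$ vs.\ $\hat\gamma$), only in the dynamics ($P$ vs.\ $\widehat P$), or only in the policy ($\pi$ vs.\ $\pi^E$). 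Since $r\in[0,R_{\max}]$, centering the reward gives $|\mathbb{E}_p[r]-\mathbb{E}_q[r]|\le R_{\max}D_{TV}(p,q)=\tfrac{R_{\max}}{2}\Vert p-q\Vert_1$ for any distributions $p,q$, so by the triangle inequality it suffices to bound the five $\ell_1$ distances along the chain: the two discount links will produce the first summand, the two dynamics links the $\hat\gamma\epsilon_P$ summand, and the single policy link the $\epsilon_\pi$ summand.

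For a discount link, fix the policy and dynamics and write $d_{\gamma}=(1-\gamma)\sum_{t\ge0}\gamma^t d_t$ as a geometric mixture of the per-timestep state-action distributions $d_t$; then $\Vert d_\gamma-d_{\hat\gamma}\Vert_1\le\sum_{t\ge0}|(1-\gamma)\gamma^t-(1-\hat\gamma)\hat\gamma^t|$. Because $\hat\gamma\le\gamma$, the two geometric weight sequences cross exactly once, so this sum telescopes to $2(\gamma^{k+1}-\hat\gamma^{k+1})$ at the crossing index $k$, and $\gamma^{k+1}-\hat\gamma^{k+1}=(\gamma-\hat\gamma)\sum_{i=0}^{k}\gamma^i\hat\gamma^{k-i}\le(\gamma-\hat\gamma)/(1-\gamma)$; the two such links (for $\pi$ and for $\pi^E$) thus contribute at most $\tfrac{2R_{\max}(\gamma-\hat\gamma)}{1-\gamma}$. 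For a dynamics link I would use the simulation lemma in its visitation-distribution form: solving the Bellman-flow identity for $d^\rho_{P,\hat\gamma}-d^\rho_{\widehat P,\hat\gamma}$ and using that the $\hat\gamma$-discounted transition operator is an $\ell_1$-contraction of modulus $\hat\gamma$ gives $\Vert d^\rho_{P,\hat\gamma}-d^\rho_{\widehat P,\hat\gamma}\Vert_1\le\tfrac{2\hat\gamma}{1-\hat\gamma}\,\mathbb{E}_{(s,a)\sim d^\rho_{\widehat P,\hat\gamma}}[D_{TV}(\widehat P\Vert P)]$; summing over $\rho\in\{\pi,\pi^E\}$ yields the $\hat\gamma\epsilon_P$ term. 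For the policy link I would apply the performance-difference lemma under $(\widehat P,\hat\gamma)$, which gives $\mathbb{E}_{d^\pi_{\widehat P,\hat\gamma}}[r]-\mathbb{E}_{d^E_{\widehat P,\hat\gamma}}[r]=\mathbb{E}_{(s,a)\sim d^\pi_{\widehat P,\hat\gamma}}\big[\sum_{a'}(\pi-\pi^E)(a'\!\mid s)\,Q^{\pi^E}(s,a')\big]$; bounding $|Q^{\pi^E}|\le R_{\max}/(1-\hat\gamma)$ and passing to total-variation distance turns this into at most $\tfrac{2R_{\max}}{1-\hat\gamma}\,\epsilon_\pi$. Adding the three contributions and loosening the constants so as to factor out $2R_{\max}/(1-\hat\gamma)$ gives (\ref{theorem1_1}).

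The main obstacle is the bookkeeping of reference measures. Each simulation-lemma step and the policy step must be arranged so that the residual error appears as an expectation under the \emph{model-induced} distribution ($d^\rho_{\widehat P,\hat\gamma}$, and $d^\pi_{\widehat P,\hat\gamma}$ for the policy term) rather than under $d^\rho_{P,\hat\gamma}$ or $d^E_{\widehat P,\hat\gamma}$, which is what a naive coupling or the opposite direction of the Bellman-flow expansion would yield; this is precisely what makes $\epsilon_P$ and $\epsilon_\pi$ come out as stated. Getting the contraction to act on the perturbation term (so that $(I-T)^{-1}$ multiplies the $\mathbb{E}_{\widehat P}[\cdots]$ factor) and using the performance-difference lemma rather than a symmetric coupling for the policy link are the two choices that accomplish this; the remaining estimates are routine.
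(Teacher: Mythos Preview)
Your proposal is correct and follows essentially the same approach as the paper: the paper uses exactly your five-term telescoping chain and bounds each link by a separate lemma (a discount-change bound, a simulation lemma for the dynamics change, and a policy-change bound), assembling them by the triangle inequality. The only cosmetic difference is that the paper argues each lemma via resolvent identities on $(I-\gamma\mathbf P^\pi)^{-1}$ rather than via total-variation of visitation distributions and the performance-difference lemma; your intermediate constants are in fact slightly tighter, so the final ``loosening'' step you describe is harmless.
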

The proof is deferred to Appendix \ref{proof of theorem 1}. The first term reflects the train-test discrepancy caused by optimizing policy using $\hat{\gamma}$ instead of the discount factor $\gamma$ that is used for evaluation, i.e. in the LHS of inequality (\ref{theorem1_1}), and is minimized as we choose $\hat{\gamma}$ close to $\gamma$. The second term represents the effect of the error on estimating dynamics $\epsilon_P$. It can be observed that this term becomes smaller if the effective planning horizon is shortened by using a lower $\hat{\gamma}$. From the definition of $\epsilon_P$, it is the accumulation of step-wise dynamics estimation error over trajectories of both $\pi$ and $\pi^E$, and there will always be an irreducible amount of $\epsilon_P$ in the offline case. The last term stands for the policy difference between $\pi$ and $\pi^E$ over the estimated dynamics and will be minimized if imitation learning is done properly to maximize the objective (\ref{distribution matching}).

In short, the error of the imitated policy will be mainly dependent on the first two terms of (\ref{theorem1_1}), which have opposite dependencies to the discount factor for training $\hat{\gamma}$. This result supports our argument about the trade-off between two distinct effects by discount factor during the offline training and shows the possibility that an offline IL algorithm may benefit from choosing $\hat{\gamma}$ that is smaller than $\gamma$ that we use to evaluate.

\section{Controlling the discount factor in offline imitation learning}
\label{4}

As suggested by Theorem \ref{theorem1}, we expect that there would be a benefit from controlling the discount factor $\gamma$ that is used for training. The algorithm derivation is based on a visitation distribution matching~(\ref{3.2_1}-\ref{3.2_9}), as well as those of previous studies, which appears independent of and applicable to any choice of the discount factor in principle.
However, it turns out that naively lowering $\gamma$ in previously suggested visitation distribution matching offline IL algorithms~\cite{kim2021demodice,ma2022versatile} only results in a monotonically decreasing performance, as shown in Figure \ref{various_gamma}. The performance reduction of using lower discount factors is significant, and it seems pointless to choose any $\gamma$ below $0.99$ in contrast to our analysis. In this section, we analyze the cause of this pathological behavior and propose a simple scheme that can alleviate it.
\begin{figure}[t]
    \centering
    \includegraphics[width=4.5in]{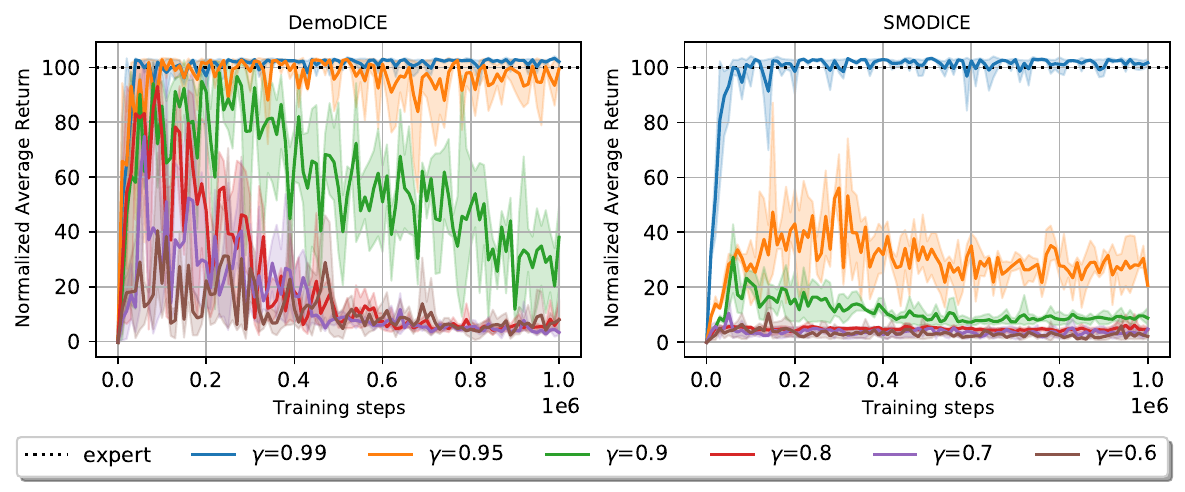}
    \caption{Learning curves of DemoDICE~\cite{kim2021demodice} and SMODICE~\cite{ma2022versatile} over various choices of discount factor $\gamma$. We use $\mathcal{D}^O$ consisting of 400 expert trajectories and 800 random-policy trajectories, and $\mathcal{D}^E$ consisting of 1 expert trajectory. Evaluations are averaged over 3 seeds in a \texttt{Hopper-v2} environment, and they are normalized so that 0 corresponds to the average score of the random-policy dataset, and 100 corresponds to the average score of the expert policy dataset. }
    \label{various_gamma}
\end{figure}
\begin{figure}[t]
\centering
\begin{subfigure}{0.49\textwidth}
    \centering
    \label{fig2_1}
    \begin{tikzpicture}[node distance = 3cm, auto]
    \tikzset{
        cli/.style={circle,minimum size=0.3cm,fill=white,draw,
                    general shadow={fill=gray!60,shadow xshift=1pt,shadow yshift=-1pt}},
            }
    \node[cli] (1) at (0,0) {$s_1$};
    \node[cli] (2) at (2.5,0) {g};
    \node[cli] (0) at (1.25,1.5) {$s_0$};
    \path [-stealth, thick]
        (0) edge [left]node {$a_1=0.6$} (1)
        (1) edge [below] node {$a_1=1$} (2)
        (0) edge [right] node {$a_2=0.4$} (2)
        (2) edge [loop right]  node {$a_1$} ();
    \end{tikzpicture}
    \caption{expert policy}
\end{subfigure}
\hfill
\begin{subfigure}{0.49\textwidth}
    \centering
    \label{fig2_2}
    \begin{tikzpicture}[node distance = 3cm, auto]
    \tikzset{
        cli/.style={circle,minimum size=0.3cm,fill=white,draw,
                    general shadow={fill=gray!60,shadow xshift=1pt,shadow yshift=-1pt}},
            }
    \node[cli] (1) at (0,0) {$s_1$};
    \node[cli] (2) at (2.5,0) {g};
    \node[cli] (0) at (1.25,1.5) {$s_0$};
    \path [-stealth, thick]
        (0) edge [left]node {$a_1=\theta$} (1)
        (1) edge [below] node {$a_1=1$} (2)
        (0) edge [right] node {$a_2=1-\theta$} (2)
        (2) edge [loop right]  node {$a_1$} ();
    \end{tikzpicture}
    \caption{learned policy}
\end{subfigure}
\caption{Toy infinite horizon MDP example with 3 states and 2 actions. All transitions are deterministic and shown with the arrows. $s_0$ is initial state, and $g$ is absorbing state. We indicate the probability of taking an action based on the corresponding policy on the arrows in the figure. (a) represents expert policy and (b) is for learned policy with parameter $\theta$.}
\label{simple mdp}
\end{figure}
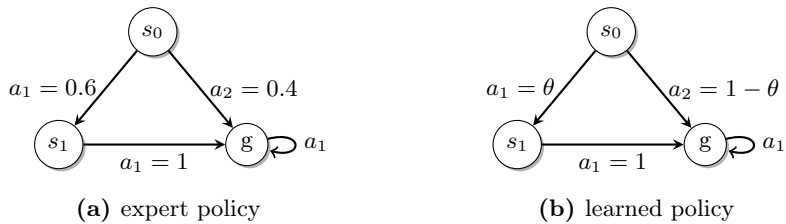
\subsection{Distribution mismatch in training a discriminator}
\label{4.1}
The main problem lies in the fact that while the visitation distribution matching objective (\ref{3.2_1}) requires the estimation of $d^E(s,a)$, which is a discounted state-action distribution induced by the expert policy $\pi^E$, a discriminator $c(s,a)$ is learned to discriminate the empirical distributions of the datasets, i.e. between $D(s,a)$ and $E(s,a)$. Since the empirical distributions are undiscounted, for a state-action pair sample $(s_t,a_t)$ at timestep $t$, $d^E(s,a)$ would weight $\gamma^t$ on this sample compared to $E(s,a)$ that does not weight on this sample, making the two distributions significantly different as we have smaller $\gamma$.

To observe the consequence of having a discriminator trained on empirical distributions, we assume that the reward is represented with an optimal discriminator such that $r(s,a)=\log\frac{E(s,a)}{D(s,a)}$. By applying this to objective (\ref{3.2_6}), it becomes:
\begin{align}
\begin{split}
\label{4.1_1}
    &\max_{d^\pi\geq0}\min_{\nu,\lambda}\mathcal{L}(d^\pi,\nu,\lambda) =(1-\gamma)\mathbb{E}_{s\sim p_0}[\nu(s)]\\&+\sum_{s,a}d^\pi(s,a)\left[\gamma \mathcal{P}\nu(s)-\mathcal{B}\nu(s)+\log\frac{E(s,a)}{D(s,a)}-\log\frac{d^\pi(s,a)}{D(s,a)}+\lambda\right]-\lambda.
\end{split}
\end{align}

By taking the derivation steps backward, we can confirm that using the discriminator trained on empirical distributions is equivalent to solving the visitation distribution matching objective (\ref{3.2_1}-\ref{3.2_3}) except for the objective:
\begin{align}
\begin{split}
\label{4.1_3}
	\max_{d^\pi}\;&-D_{KL}(d^\pi(s,a)\Vert E(s,a)).
\end{split}
\end{align}

That is, by training a discriminator trained on empirical distributions, we actually have matched $d^\pi(s,a)$ and $E(s,a)$ to get a policy $\pi$.
Note that there is a one-to-one correspondence between the state-action visitation distributions and policies~\cite{ho2016generative}. It implies that unless $E(s,a)=d^E(s,a)$, even if $E(s,a)$ is a valid visitation distribution that satisfies Bellman flow constraints, the policy inferred by matching $E(s,a)$ will be different from $d^E$, and thus, $\pi^*\neq \pi_E$. In most cases, $E(s,a)$ would not be a valid visitation distribution, and a policy $\pi$ that gives $D_{KL}(d^\pi(s,a)\Vert E(s,a))=0$ would not exist in general. While this discrepancy between $E$ and $d^E$ is negligible when $\gamma$ is large and the length of trajectories in $\mathcal{D}^E$ is long enough, large error on the imitated policy can be incurred in other cases. We give a simple toy example below.

\subsubsection{Illustrative example} Assume an infinite horizon MDP with 2 states and an absorbing state, and with 2 actions as shown in Figure~\ref{simple mdp}. The expert policy $\pi_E$ has $0.6$ probability of doing $a_1$ in $s_0$, $0.4$ of doing $a_2$ in $s_0$, and $1$ of doing $a_1$ in the other states. Assuming that trajectories are sampled up to timestep 2, the expert demonstration $\mathcal{D}^E$ will be consisting of two kinds of trajectories $\tau_1 = (s_0, a_1, s_1, a_1, g, a_1)$ and $\tau_2 = (s_0, a_2, g, a_1, g, a_1)$ with the ratio of $6:4$. $E(s,a)$ can be calculated by $\frac{\textrm{The number of (s, a) pairs}}{\textrm{Total number of state-action pairs}}$ accordingly. The learned policy is parametrized as $\pi(a_1|s_0)=\theta$, $\pi(a_2|s_0)=1-\theta$, and $\pi(a_1|\cdot)=1$ in other states. By computing $d^\pi$ based on the above, we can express $D_{KL}(d^\pi\Vert E)$ in terms of $\theta$ and $\gamma$. Then, we can minimize $D_{KL}$ to get $\theta^*$, and see if we can recover $\theta^*=0.6$. In this toy example, it turns out that $\theta\neq0.6$ unless $\gamma=0.5$ (detailed derivations can be found in Appendix \ref{proof for simple MDP}). It can be seen in this example that we cannot match two distributions $d^\pi(s,a)=d^\pi(s)\pi(a|s)$, $E(s,a)=E(s)\pi^E(a|s)$ in general, and by minimizing the distribution between state visitation distributions, optimized $\pi$ will be different to $\pi_E$.

\subsection{Inverse geometric initial state sampling}
\label{4.2}
As shown in the previous subsection, we cannot recover $\pi=\pi^E$ unless $d^E=E$, where $E$ is the distribution that the discriminator is trained on. One straightforward solution to this problem is to train the discriminator to distinguish $d^E$ and $E$, eliminating the root cause of the problem. We can simulate the sampling from $d^E$ by weighting each sample in the dataset with respect to their timesteps when they were sampled. For example, we can first sample the timestep $t$ from a geometric distribution $\text{Geom}(1-\gamma)$ and then sampling $(s_t,a_t)$ pair that had been sampled at $t$. This procedure will approximate the sampling from $d^E$ sufficiently well, given that the trajectories stored in $\mathcal{D}^E$ is long enough.

However, training the discriminator to distinguish $d^E$ gives rise to another problem; if we use lower $\gamma$, the samples from $d^E$ will mostly consist of early timestep samples of $\mathcal{D}^E$, since the probability assigned decreases exponentially over timesteps. This results in a significant under-usage of dataset $\mathcal{D}^E$ by throwing away samples of later timesteps, and significantly deteriorates the performance. We need a way to ensure that we get $\pi=\pi^E$ while not hurting the effective number of data.

To this end, we propose to devise a different initial state distribution other than the actual $p_0$ to satisfy both conditions. Note that even if we change the initial distribution, the optimality of policies are not affected as mentioned in~\cite{kostrikov2019imitation}. $d^\pi$ and $d^E$ that $\pi$ and $\pi_E$ induces will be different, but one-to-one correspondence does still hold that matching visitation distributions ensure $\pi=\pi_E$ with arbitrary initial state distribution.

In particular, if we use an initial distribution that makes $d^E=E$, i.e. flattens the visitation distribution to look like an undiscounted distribution, recovering $\pi=\pi_E$ will be guaranteed even if we use the discriminator that learns from the empirical distributions. Note that sampling from $d^E$ corresponds to sample timestep $t$ from $\text{Geom}(1-\gamma)$ and choose $(s_t,a_t)$ that had been sampled at $t$. Hence, if we use a modified initial distribution $\tilde{p}_0$ that samples from all the timesteps in $\mathcal{D}^E$ with weights inversely proportional to $\text{Geom}(1-\gamma)$, the resultant $\tilde{d}^E$ will have uniform weights regarding timesteps and therefore $\tilde{d}^E=E$. We call this modified initial state sampling method as Inverse Geometric Initial state sampling (IGI). In practice, there are a few more things to consider determining the weightings for each possible initial state samples; e.g. due to datasets being truncated at certain timesteps and trajectories that are terminated early. Nevertheless, it is possible to set up a system of linear equations to obtain the required weights for each sample to ensure $\tilde{d}^E=E$. Details on how to set up a linear system is shown in Appendix \ref{IGI}.

\begin{figure}[t!]
    \centering
    \includegraphics[height=1.5in]{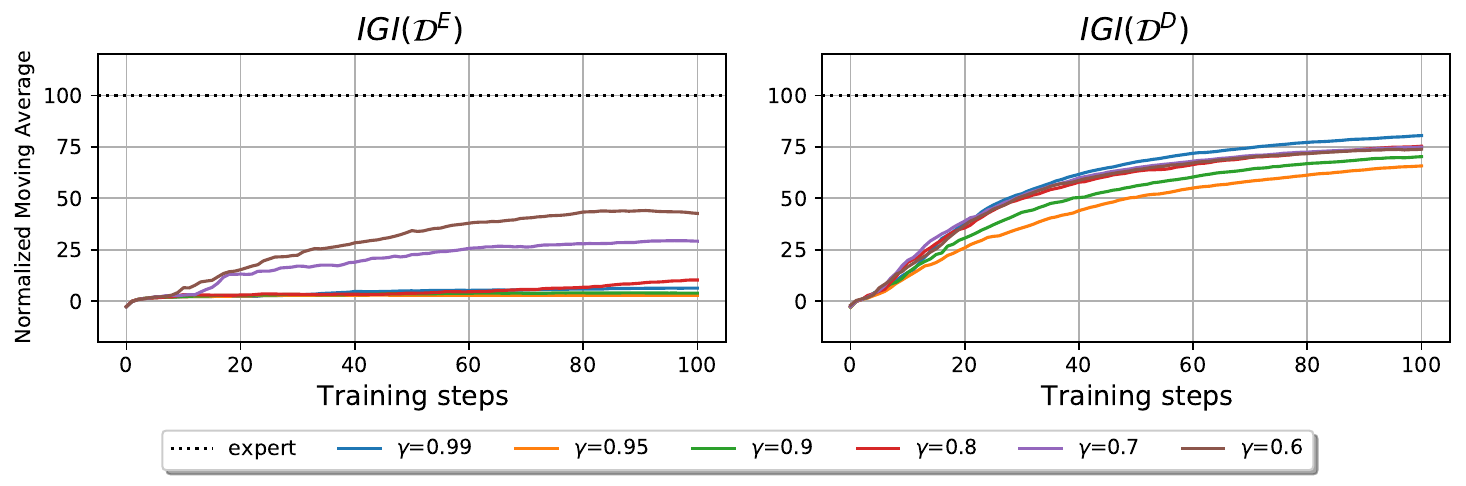}
    \caption{Learning curves of our method in \texttt{HalfCheetah-v2}. Here, suboptimal dataset is consisted of 100 expert trajectories and 800 random-policy trajectories. We applied moving average with 3 seeds. The plot on the left is the result of sampling $\tilde{p}_0(s\vert t)$ from expert dataset, and the right is the result of sampling from total dataset.}
    \label{irucompare}
\end{figure}

Unfortunately, it turns out that the exact IGI that ensures $\tilde{d}^E=E$ can be too restrictive. Following the recent practices of using a single expert policy for $\mathcal{D}^E$ in evaluating imitation performances, only the state-action pairs in the single trajectory can be sampled as an initial state, which cripples the diversity. Furthermore, we found that sampling the initial states only from $\mathcal{D}^E$ also limited the opportunity of our agent to learn how to act outside $\mathcal{D}^E$, incurring the covariate shift and compounding error problems that BC has suffered. On the other hand, using IGI to sample from the total dataset $\mathcal{D}^D$ did not exhibit such a problem, while alleviating the discrepancy between $\tilde{d}^E$ and $E$. In Figure \ref{irucompare}, we compare the performance of two options: IGI($\mathcal{D}^E$) and IGI($\mathcal{D}^D$). It is clearly visible that the imitation performance of IGI($\mathcal{D}^E$) is limited, whereas IGI($\mathcal{D}^D$) performs robustly over a wide set of $\gamma$s. As a consequence, we used IGI with samples from $\mathcal{D}^D$ in the following experiments.

\section{Related work}
\label{related work}
\subsubsection{Controlling the effective planning horizon} In reinforcement learning (RL), the discount factor effectively controls the amount of forward planning the agent considers making a decision. \cite{petrik2008biasing} has shown that the approximation error bound incurred in an inaccurate model can be tightened by using a low discount factor. \cite{jiang2015dependence} has shown the similar result, but they have analyzed in terms of the complexity of the class of policies. They show that there is a trade-off between the complexity of the policy space and the error due to the approximated model according to the effective planning horizon, which can controlled by a discount factor. Recently, \cite{hu2022role} analyzed the role of discount factor from the perspective of offline RL. In addition to the trade-off relationship between optimality and sample efficiency, they show theoretically and empirically that the low discount factor can also be seen as a model-based pessimism. On the other hand, there has not been a study on the effect of discount factor on offline imitation learning, up to our knowledge.
\subsubsection{Offline imitation learning by leveraging the duality of RL} Imitation learning (IL) aims to mimic the expert policy by using the expert demonstrations and the online interactions, but offline supplementary dataset is given instead in offline case. Recently, 
there have been a large amount of literature that leverages the duality in RL to develop a novel algorithm estimating quantities related to visitation distributions. In a field of offline IL, DemoDICE~\cite{kim2021demodice} proposed to solve it by deriving the dual of visitation distribution matching problem with an explicit regularizer minimizing the $f$-divergence between the visitation distribution induced by policy and the supplement dataset distribution. SMODICE~\cite{ma2022versatile} considers the IfO problem, which solves IL with state-only demonstrations by expert, and proposes a versatile offline IL algorithm by leveraging $f$-divergence and Fenchel duality instead of KL used in~\cite{kim2021demodice}.

\section{Experiments}
\label{experiment}
We evaluate the performance of our algorithm in both discrete and continuous MDP. We first show the trade-off effect by the discount factor in finite-discrete MDP, then evaluate the imitation performance of our algorithm using IGI in continuous MDP.
\subsection{Finite and Discrete MDP}
In finite-discrete MDP, we empirically show the trade-off effect by the discount factor in the Random MDP. This environment generates the finite and discrete MDP randomly. We follow the environment configuration of ~\cite{lee2021optidice}. Details of the experiment in finite-discrete MDP and its results with various settings are shown in Appendix \ref{discrete result} due to the lack of space.

\subsection{Continuous MDP}
For continuous MDP, we evaluate the imitation performance of our algorithm using IGI in the MuJoCo continuous control environment \cite{todorov2012mujoco}: \texttt{HalfCheetah-v2}, \texttt{Hopper-v2}, \texttt{Walker2d-v2} and \texttt{Ant-v2}. The dataset used for learning is obtained from the D4RL datasets \cite{fu2020d4rl}. We sampled expert trajectories and random trajectories from the \texttt{expert} and \texttt{random} dataset of the D4RL benchmark, respectively. We use a single expert trajectory for $\mathcal{D}^E$, and our suboptimal dataset $\mathcal{D}^O$ is constructed as a union of expert trajectories and random-policy trajectories with specific ratios.

\begin{figure}[t!]
    \centering
    \includegraphics[width=\textwidth]{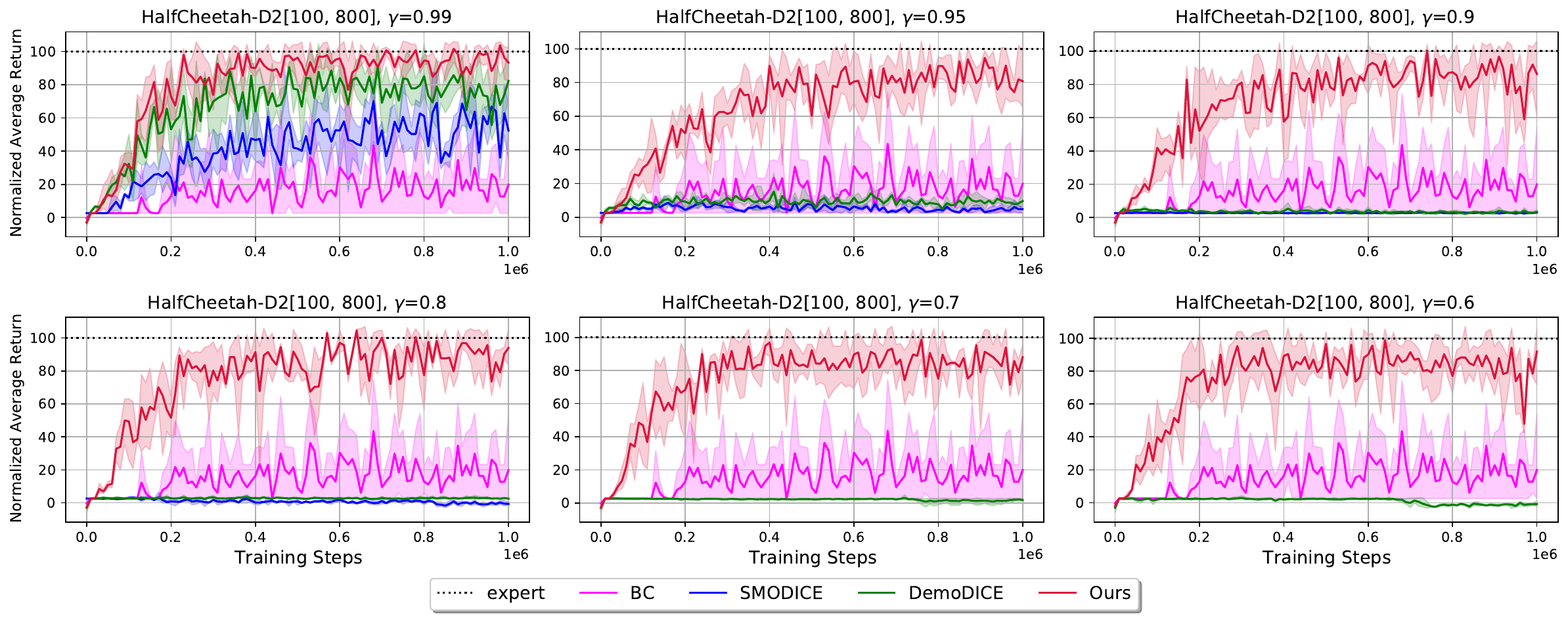}
    \caption{Performance of our algorithm (IGI) and baselines according to different $\gamma$s on \texttt{HalfCheetah-v2} with D2 dataset. Here, the shaded area shows the standard error of the normalized evaluation over 3 seeds.}
    \label{experiment_allgamma}
\end{figure}

We conduct experiments by changing the ratio between the expert data and the random-policy data of the suboptimal dataset. We call suboptimal dataset consisting of 400 expert trajectories and 800 random-policy trajectories as D1, 100:800 as D2, 50:800 as D3. We compare our algorithm using IGI with other offline IL algorithms, DemoDICE \cite{kim2021demodice}, SMODICE \cite{ma2022versatile}, and BC. Performance of other algorithms is measured based on the official hyperparameters made public by authors without any modification. For BC, we show the result of learning with $\mathcal{D}^D$, which has the best performance among the various combination of the data to be cloned. All experiment results in this paper are averaged over 3 seeds and normalized so that 0 corresponds to the average score of the random-policy dataset, and 100 corresponds to the average score of the expert policy dataset.

In order to demonstrate the robustness of our algorithm over different $\gamma$s, we show the results by applying various discount factor. We use discount factor for $\gamma\in\{0.99, 0.95, 0.9, 0.8, 0.7, 0.6\}$. We also tested every algorithm with $\gamma$ below 0.6, but due to exploding gradient, it was not possible to run DemoDICE and SMODICE for lower $\gamma$s below $0.6$. Therefore, we report only the result of $\gamma$s mentioned above.

The results of baselines on different $\gamma$s are shown in Figure \ref{experiment_allgamma}. It can be seen that the introduced method IGI makes the algorithm perform well regardless of $\gamma$. Additional results on other environments and D1, D3-ratio datasets are shown in the Appendix \ref{continuous result}. In addition, the experiment in Figure \ref{experiment_allgamma} has too large enough size of the dataset to make $\epsilon_P$ in the second term of Theorem \ref{theorem1} negligible, and the effect of controlling the discount factor is not clearly visible. Therefore, we make additional experiments by lowering the amount of the dataset about 10 times less (40 expert trajectories and 80 random-policy trajectories) than the D1 setting to clearly show the effect of controlling the discount factor. Relevant results are shown in Appendix \ref{4080}. Also, the pseudocode and the hyperparameters used to demonstrate IGI can be found in Appendix \ref{experiment details}.

\section{Conclusion}
\label{conclusion}
In this paper, we analyze the effect of controlling the discount factor on offline IL and motivate that the discount factor can take a role of a regularizer to prevent the sampling error of the supplementary dataset from hurting the performance. We show that the previously suggested imitation learning algorithms that utilize discriminators and a visitation distribution matching objective suffer from the discrepancy between the visitation distribution and the empirical distribution when a low discount factor $\gamma$ is applied. To this end, we proposed Inverse Geometric Initial state sampling (IGI), which uses the whole dataset with the weighting inversely proportional to the geometric distribution, to alleviate the problem that we cannot recover the expert policy $\pi^E$. We show that our approach shows stable and competitive performance regardless of the discount factor compared to other visitation distribution matching algorithms with explicit regularization.

\bibliographystyle{splncs04}
\bibliography{splncs04}

\appendix
\section{Proof of Theorem \ref{theorem1}}
\label{proof of theorem 1}
The proof of Theorem \ref{theorem1} is based on the analysis in previous works \cite{petrik2008biasing,jiang2015dependence,lee2020batch,xu2020error}. We provide three Lemmas first and use them to prove the Theorem \ref{theorem1}. To prove three Lemmas, we define several notations. $\textbf{P}$ and $\hat{\textbf{P}}$ denote a matrix of the underlying transition dynamics and estimated transition dynamics, respectively. $\mathbb{E}_{d^{\pi}_{P,\gamma}}[r(s,a)]$ denotes the expected total reward discounted by $\gamma$ for the policy $\pi$ under transition dynamics $\textbf{P}$. $\textbf{d}_0$ and $\textbf{r}$ denote a vector of the initial state probability and the reward, respectively. For $\textbf{r}$ and $\textbf{P}$, if the $\pi$ is at superscript, it means ``following policy $\pi$''.
\begin{lemma}
\label{gamma bound}
    For any MDP with bounded rewards $|r(s,a)|\leq R_{\text{max}}$, for all $\pi:S\rightarrow A$ and $\hat{\gamma}\leq\gamma$,
    \begin{equation*}
        \left|\mathbb{E}_{d^{\pi}_{P,\gamma}}\left[r(s,a)\right]-\mathbb{E}_{d^{\pi}_{P,\hat{\gamma}}}\left[r(s,a)\right]\right|\leq R_{\text{max}}\frac{\gamma-\hat{\gamma}}{(1-\gamma)(1-\hat{\gamma})}.
    \end{equation*}
\end{lemma}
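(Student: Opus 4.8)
The plan is to read $\mathbb{E}_{d^{\pi}_{P,\gamma}}[r(s,a)]$ exactly as the appendix defines it, namely as the expected total reward discounted by $\gamma$, so that in the vector notation introduced there
\begin{equation*}
\mathbb{E}_{d^{\pi}_{P,\gamma}}[r]=\textbf{d}_0^{\top}\left(I-\gamma\textbf{P}^{\pi}\right)^{-1}\textbf{r}^{\pi}=\sum_{t=0}^{\infty}\gamma^{t}\rho_t,\qquad \rho_t:=\textbf{d}_0^{\top}\left(\textbf{P}^{\pi}\right)^{t}\textbf{r}^{\pi}.
\end{equation*}
Here $\rho_t$ is the expected reward collected at step $t$. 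The single observation I would build everything on is that $\rho_t$ does not depend on the discount factor: once $\pi$, $\textbf{P}$ and $\textbf{d}_0$ are fixed the law of the state-action process is fixed, and changing the discount factor only reweights the per-step rewards. Thus both terms on the left share the one sequence $(\rho_t)_{t\ge0}$ and differ only through their weights.

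With this in hand the difference telescopes,
\begin{equation*}
\mathbb{E}_{d^{\pi}_{P,\gamma}}[r]-\mathbb{E}_{d^{\pi}_{P,\hat{\gamma}}}[r]=\sum_{t=0}^{\infty}\left(\gamma^{t}-\hat{\gamma}^{t}\right)\rho_t,
\end{equation*}
and I would bound it termwise using two facts. First, since $\hat{\gamma}\le\gamma<1$ each coefficient satisfies $\gamma^{t}-\hat{\gamma}^{t}\ge0$, so the triangle inequality keeps the coefficients intact and no sign cancellation has to be tracked. Second, $\textbf{d}_0^{\top}(\textbf{P}^{\pi})^{t}$ is a probability distribution over states — a distribution multiplied by row-stochastic matrices — so $\rho_t$ is a convex combination of entries of $\textbf{r}^{\pi}$ and hence $|\rho_t|\le R_{\text{max}}$ under $|r(s,a)|\le R_{\text{max}}$. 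Together these give
\begin{equation*}
\left|\mathbb{E}_{d^{\pi}_{P,\gamma}}[r]-\mathbb{E}_{d^{\pi}_{P,\hat{\gamma}}}[r]\right|\le\sum_{t=0}^{\infty}\left(\gamma^{t}-\hat{\gamma}^{t}\right)|\rho_t|\le R_{\text{max}}\sum_{t=0}^{\infty}\left(\gamma^{t}-\hat{\gamma}^{t}\right).
\end{equation*}
Summing the two geometric series, $\sum_t\gamma^t=\tfrac{1}{1-\gamma}$ and $\sum_t\hat{\gamma}^t=\tfrac{1}{1-\hat{\gamma}}$, and simplifying $\tfrac{1}{1-\gamma}-\tfrac{1}{1-\hat{\gamma}}=\tfrac{\gamma-\hat{\gamma}}{(1-\gamma)(1-\hat{\gamma})}$ then produces the claimed bound with constant exactly one.

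The step I expect to be the crux is the nonnegativity of the coefficients $\gamma^{t}-\hat{\gamma}^{t}$, which is precisely where $\hat{\gamma}\le\gamma$ enters: it is what lets the absolute value pass inside the sum and be controlled by $|\rho_t|\le R_{\text{max}}$ with no factor of two. Were the coefficients allowed to change sign, the same computation would only deliver a total-variation-type estimate $R_{\text{max}}\sum_t|\gamma^t-\hat{\gamma}^t|$ with a worse constant. For this reason it is essential to keep both the discounted-return (unnormalized) reading of $\mathbb{E}_{d^{\pi}_{P,\gamma}}[r]$ and the monotonicity $\hat{\gamma}\le\gamma$ explicit; under the normalized visitation expectation $(1-\gamma)\sum_t\gamma^t\rho_t$ with signed rewards the stated constant need not survive. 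As an independent check, the identical bound follows from the resolvent identity $(I-\gamma\textbf{P}^{\pi})^{-1}-(I-\hat{\gamma}\textbf{P}^{\pi})^{-1}=(\gamma-\hat{\gamma})(I-\gamma\textbf{P}^{\pi})^{-1}\textbf{P}^{\pi}(I-\hat{\gamma}\textbf{P}^{\pi})^{-1}$ together with $\|(I-\beta\textbf{P}^{\pi})^{-1}\|_{\infty}=\tfrac{1}{1-\beta}$ and $\|\textbf{P}^{\pi}\|_{\infty}=1$.
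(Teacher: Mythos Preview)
Your proof is correct and follows essentially the same route as the paper: expand both discounted returns as $\sum_t\gamma^t\rho_t$ and $\sum_t\hat\gamma^t\rho_t$, use the nonnegativity of $\gamma^t-\hat\gamma^t$ from $\hat\gamma\le\gamma$, bound the per-step expected reward by $R_{\max}$, and sum the two geometric series. The only cosmetic difference is that the paper bounds the scalar via the $\ell_\infty$ norm of the vector $\sum_t(\gamma^t-\hat\gamma^t)(\textbf{P}^\pi)^t\textbf{r}^\pi$ before applying $\textbf{d}_0$, whereas you work directly with the scalar $\rho_t=\textbf{d}_0^\top(\textbf{P}^\pi)^t\textbf{r}^\pi$ and give the convex-combination justification for $|\rho_t|\le R_{\max}$; these are the same argument.
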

\begin{proof}
    \begin{align*}
        \left|\mathbb{E}_{d^{\pi}_{P,\gamma}}\left[r(s,a)\right]-\mathbb{E}_{d^{\pi}_{P,\hat{\gamma}}}\left[r(s,a)\right]\right|&= \left\vert\mathbb{E}_{d^{\pi}_{P,\hat{\gamma}}}\left[r(s,a)\right]-\mathbb{E}_{d^{\pi}_{P,\gamma}}\left[r(s,a)\right] \right\vert\\
        &\leq\left\Vert \sum_{t=0}^\infty\left( \gamma^t-\hat{\gamma}^t\right){\textbf{P}^\pi}^t \textbf{r}^\pi\right\Vert_\infty\\
        &\leq \sum_{t=0}^\infty\left( \gamma^t-\hat{\gamma}^t\right)R_{\text{max}}\\
        &= \left(\frac{1}{1-\gamma}-\frac{1}{1-\hat{\gamma}} \right)R_{\text{max}}\\
        &= \frac{\gamma-\hat{\gamma}}{(1-\gamma)(1-\hat{\gamma})}R_{\text{max}}
    \end{align*}
\end{proof}
We define one more notation for Lemma \ref{mdp bound} and \ref{policy bound}.  $\textbf{d}^\pi_{P,\gamma}$ denotes a vector of the marginal state probability induced by $\pi$ under transition dynamics $\textbf{P}$ using $\gamma$. 
\begin{lemma}
\label{mdp bound}
    We can bound the difference of the evaluations of policy $\pi$ on two different MDPs with bounded rewards $|r(s,a)|\leq R_{\text{max}}$ as
    \begin{equation*}
    \left|\mathbb{E}_{d^{\pi}_{P,\gamma}}\left[r(s,a)\right]-\mathbb{E}_{d^{\pi}_{\widehat{P},\gamma}}\left[r(s,a)\right]\right|\leq \frac{2\gamma R_{\text{max}}}{(1-\gamma)^2}\mathbb{E}_{\begin{subarray}{l}s\sim d^\pi_{\widehat{P},\gamma},\\ a\sim\pi\end{subarray}}\left[\mathbb{TV}\left(\hat{P}\left(s^\prime\vert s,a\right)\Vert P\left(s^\prime\vert s,a\right)\right)\right].
\end{equation*}
\end{lemma}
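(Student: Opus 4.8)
\textbf{Proof proposal for Lemma~\ref{mdp bound}.}
The plan is to recognize this as the standard \emph{simulation lemma} from model-based RL and prove it by telescoping the Bellman equations of $\pi$ under the two dynamics. First I would pass from occupancy measures to value functions: writing $V^\pi_{P,\gamma}:=(I-\gamma\textbf{P}^\pi)^{-1}\textbf{r}^\pi=\sum_{t\ge0}\gamma^t(\textbf{P}^\pi)^t\textbf{r}^\pi$ for the vector of expected discounted returns of $\pi$ under $P$, we have $\mathbb{E}_{d^\pi_{P,\gamma}}[r]=\textbf{d}_0^\top V^\pi_{P,\gamma}$ and likewise under $\widehat P$, so the left-hand side equals $\vert\textbf{d}_0^\top(V^\pi_{P,\gamma}-V^\pi_{\widehat P,\gamma})\vert$; and from bounded rewards $\Vert V^\pi_{P,\gamma}\Vert_\infty\le R_{\max}/(1-\gamma)$.

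Next I would subtract the two Bellman equations $V^\pi_{P,\gamma}=\textbf{r}^\pi+\gamma\textbf{P}^\pi V^\pi_{P,\gamma}$ and $V^\pi_{\widehat P,\gamma}=\textbf{r}^\pi+\gamma\widehat{\textbf{P}}^\pi V^\pi_{\widehat P,\gamma}$, and add and subtract $\gamma\widehat{\textbf{P}}^\pi V^\pi_{P,\gamma}$ to obtain $(I-\gamma\widehat{\textbf{P}}^\pi)(V^\pi_{P,\gamma}-V^\pi_{\widehat P,\gamma})=\gamma(\textbf{P}^\pi-\widehat{\textbf{P}}^\pi)V^\pi_{P,\gamma}$, hence $V^\pi_{P,\gamma}-V^\pi_{\widehat P,\gamma}=\gamma(I-\gamma\widehat{\textbf{P}}^\pi)^{-1}(\textbf{P}^\pi-\widehat{\textbf{P}}^\pi)V^\pi_{P,\gamma}$. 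The deliberate choice here is to arrange the cancellation so that the \emph{estimated} operator $\widehat{\textbf{P}}$, not $\textbf{P}$, sits inside the resolvent; then $\textbf{d}_0^\top(I-\gamma\widehat{\textbf{P}}^\pi)^{-1}$ is, up to a factor $1/(1-\gamma)$, exactly the (normalized) state visitation distribution $\textbf{d}^\pi_{\widehat P,\gamma}$ appearing on the right-hand side.

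Combining the two steps gives $\mathbb{E}_{d^\pi_{P,\gamma}}[r]-\mathbb{E}_{d^\pi_{\widehat P,\gamma}}[r]=\frac{\gamma}{1-\gamma}\sum_s d^\pi_{\widehat P,\gamma}(s)\,\big[(\textbf{P}^\pi-\widehat{\textbf{P}}^\pi)V^\pi_{P,\gamma}\big](s)$. I would then move the absolute value inside the sums, expand the bracket as $\mathbb{E}_{a\sim\pi(\cdot\vert s)}\big[\sum_{s'}(P(s'\vert s,a)-\widehat P(s'\vert s,a))V^\pi_{P,\gamma}(s')\big]$, and apply H\"older to each inner product: $\big\vert\sum_{s'}(P(s'\vert s,a)-\widehat P(s'\vert s,a))V^\pi_{P,\gamma}(s')\big\vert\le\Vert P(\cdot\vert s,a)-\widehat P(\cdot\vert s,a)\Vert_1\,\Vert V^\pi_{P,\gamma}\Vert_\infty\le\frac{2R_{\max}}{1-\gamma}\,\mathbb{TV}(\widehat P(\cdot\vert s,a)\Vert P(\cdot\vert s,a))$, using $\Vert\mu\Vert_1=2\,\mathbb{TV}$. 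Pulling the constant out of the $d^\pi_{\widehat P,\gamma}$- and $\pi$-expectations leaves exactly $\frac{2\gamma R_{\max}}{(1-\gamma)^2}\,\mathbb{E}_{s\sim d^\pi_{\widehat P,\gamma},\,a\sim\pi}[\mathbb{TV}(\widehat P\Vert P)]$.

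The H\"older estimate and the collection of constants are routine. The step that requires care is the telescoping: one must add and subtract the correct intermediate term so that $\widehat{\textbf{P}}$ ends up in the inverted operator (otherwise the weighting on the right would be $d^\pi_{P,\gamma}$ instead of $d^\pi_{\widehat P,\gamma}$), and one must keep track of the $1/(1-\gamma)$ relating $\textbf{d}_0^\top(I-\gamma\widehat{\textbf{P}}^\pi)^{-1}$ to the probability vector $\textbf{d}^\pi_{\widehat P,\gamma}$ — this normalization, together with the horizon-length bound on $\Vert V^\pi_{P,\gamma}\Vert_\infty$, is precisely what produces the $(1-\gamma)^{-2}$ in the stated bound.
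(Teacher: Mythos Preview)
Your proof is correct and is essentially the same argument as the paper's: both factor the difference through the resolvent identity so that $(I-\gamma\widehat{\textbf{P}}^\pi)^{-1}$ acts on $\textbf{d}_0$ (yielding the $d^\pi_{\widehat P,\gamma}$ weighting) while $(I-\gamma\textbf{P}^\pi)^{-1}$ acts on $\textbf{r}^\pi$ (yielding $\Vert V^\pi_{P,\gamma}\Vert_\infty\le R_{\max}/(1-\gamma)$), then apply H\"older. The only cosmetic difference is that the paper writes the identity directly as $A^{-1}-B^{-1}=B^{-1}(B-A)A^{-1}$ on the matrix level, whereas you arrive at the same factorization by telescoping the two Bellman equations.
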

\begin{proof}
    \begin{align}
        &\left|\mathbb{E}_{d^\pi_{P,\gamma}}[ r(s,a) ] - \mathbb{E}_{d^\pi_{\widehat{P},\gamma}}[ r(s,a) ]\right|=\left|\mathbb{E}_{d^\pi_{\widehat{P},\gamma}}[ r(s,a) ] - \mathbb{E}_{d^\pi_{P,\gamma}}[ r(s,a) ]\right|\\
        \label{using start}
        &=\left|\textbf{r}^\top\left[(I-\gamma\hat{\textbf{P}})^{-1}-(I-\gamma\textbf{P})^{-1}\right]\textbf{d}_0\right| \\
        &=\left| \textbf{r}^\top(I-\gamma\textbf{P})^{-1}\left[\gamma\hat{\textbf{P}}-\gamma\textbf{P} \right](I-\gamma\hat{\textbf{P}})^{-1}\textbf{d}_0\right|\\
        &=\gamma\left| \textbf{r}^\top(I-\gamma\textbf{P})^{-1}\left[\hat{\textbf{P}}-\textbf{P} \right]\frac{\textbf{d}^\pi_{\widehat{P}}}{1-\gamma}\right|\\
        &\leq\frac{\gamma}{1-\gamma}\left\Vert \textbf{r}^\top(I-\gamma\textbf{P})^{-1}\right\Vert_\infty \left\Vert\left[\hat{\textbf{P}}-\textbf{P} \right]\textbf{d}^\pi_{\widehat{P}} \right\Vert_1\\
        \label{using end}
        &\leq\frac{\gamma R_{\text{max}}}{(1-\gamma)^2}\left\Vert\left[\hat{\textbf{P}}-\textbf{P} \right]\textbf{d}^\pi_{\widehat{P}} \right\Vert_1\\
        &\leq\frac{\gamma R_{\text{max}}}{(1-\gamma)^2}\sum_{s^\prime,a,s}\left| \hat{P}(s^\prime\vert s,a)-P(s^\prime\vert s,a)\right|\pi(a\vert s)d^\pi_{\widehat{P}}(s)\\
        &=\frac{2\gamma R_{\text{max}}}{(1-\gamma)^2}\mathbb{E}_{\begin{subarray}{l}s\sim d^\pi_{\widehat{P},\gamma},\\ a\sim\pi\end{subarray}}\left[\mathbb{TV}\left(\hat{P}\left(s^\prime\vert s,a\right)\Vert P\left(s^\prime\vert s,a\right)\right)\right]
    \end{align}
\end{proof}
\begin{lemma}
\label{policy bound}
    We can bound the difference of the evaluations of two policies $\pi$ and $\mu$ with bounded rewards $|r(s,a)|\leq R_{\text{max}}$ as
    \begin{equation*}
        \left|\mathbb{E}_{d^{\pi}_{P,\gamma}}\left[r(s,a)\right]-\mathbb{E}_{d^{\mu}_{P,\gamma}}\left[r(s,a)\right]\right|\leq\frac{2R_{\text{max}}}{(1-\gamma)^2}\mathbb{E}_{s\sim d^\pi_P}\left[\mathbb{TV}(\pi(a\vert s)\Vert\mu(a\vert s)) \right].
    \end{equation*}
\end{lemma}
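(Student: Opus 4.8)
The plan is to follow the matrix-algebra template of the proof of Lemma~\ref{mdp bound}, but with one extra manipulation to handle the fact that here \emph{both} the per-state reward vector and the transition matrix depend on the policy. Write $\textbf{r}^\pi(s)=\sum_a\pi(a\vert s)r(s,a)$ and $\textbf{P}^\pi(s'\vert s)=\sum_a\pi(a\vert s)P(s'\vert s,a)$, so that the value vector $\textbf{V}^\pi$ obeys the Bellman equation $\textbf{V}^\pi=\textbf{r}^\pi+\gamma\textbf{P}^\pi\textbf{V}^\pi$, i.e. $\textbf{V}^\pi=(I-\gamma\textbf{P}^\pi)^{-1}\textbf{r}^\pi$, and $\mathbb{E}_{d^\pi_{P,\gamma}}[r(s,a)]=\textbf{d}_0^\top\textbf{V}^\pi$ up to the same $1-\gamma$ normalization bookkeeping used in Lemma~\ref{mdp bound}. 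Likewise define $Q^\mu(s,a)=r(s,a)+\gamma\sum_{s'}P(s'\vert s,a)V^\mu(s')$ and $A^\mu=Q^\mu-V^\mu$.

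First I would not expand the difference of the two resolvents directly, since unlike in Lemma~\ref{mdp bound} the vectors $\textbf{r}^\pi$ and $\textbf{r}^\mu$ also differ. Instead, substitute $\textbf{r}^\mu=(I-\gamma\textbf{P}^\mu)\textbf{V}^\mu$ and telescope:
\begin{equation*}
\textbf{V}^\pi-\textbf{V}^\mu=(I-\gamma\textbf{P}^\pi)^{-1}\textbf{r}^\pi-\textbf{V}^\mu=(I-\gamma\textbf{P}^\pi)^{-1}\left(\textbf{r}^\pi+\gamma\textbf{P}^\pi\textbf{V}^\mu-\textbf{V}^\mu\right).
\end{equation*}
This is exactly the performance-difference identity: the bracketed vector has $s$-th entry $\sum_a\pi(a\vert s)Q^\mu(s,a)-V^\mu(s)=\mathbb{E}_{a\sim\pi}[A^\mu(s,a)]$, and since $\mathbb{E}_{a\sim\mu}[A^\mu(s,a)]=0$ I can rewrite it as $\sum_a(\pi(a\vert s)-\mu(a\vert s))A^\mu(s,a)$. (The same expression is reached by the add-and-subtract route $(I-\gamma\textbf{P}^\pi)^{-1}\textbf{r}^\mu$, which splits the difference into a reward term and a transition term just as in Lemma~\ref{mdp bound}; both routes land on the same single term.)

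The rest is estimation. By Hölder, $\left|\sum_a(\pi(a\vert s)-\mu(a\vert s))A^\mu(s,a)\right|\le\|A^\mu(s,\cdot)\|_\infty\,\|\pi(\cdot\vert s)-\mu(\cdot\vert s)\|_1=2\|A^\mu(s,\cdot)\|_\infty\,\mathbb{TV}(\pi(a\vert s)\Vert\mu(a\vert s))$. Because rewards lie in $[0,R_{\text{max}}]$, both $V^\mu$ and $Q^\mu$ lie in $[0,R_{\text{max}}/(1-\gamma)]$, so $\|A^\mu\|_\infty\le R_{\text{max}}/(1-\gamma)$. Finally, pushing $\textbf{d}_0$ through the remaining resolvent turns $\textbf{d}_0^\top(I-\gamma\textbf{P}^\pi)^{-1}$ into $\frac{1}{1-\gamma}(\textbf{d}^\pi_{P,\gamma})^\top$, exactly as the $\tfrac{\textbf{d}^\pi_{\widehat P}}{1-\gamma}$ substitution in the proof of Lemma~\ref{mdp bound}; collecting the $\frac{1}{1-\gamma}$ from this resolvent, the $\frac{R_{\text{max}}}{1-\gamma}$ from $\|A^\mu\|_\infty$, and the factor $2$, and using that $d^\pi_{P,\gamma}$ is a probability vector so the remaining sum over $s$ is the stated expectation, yields $\frac{2R_{\text{max}}}{(1-\gamma)^2}\mathbb{E}_{s\sim d^\pi_P}[\mathbb{TV}(\pi(a\vert s)\Vert\mu(a\vert s))]$.

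The main obstacle, and the only place the argument genuinely departs from the template of Lemma~\ref{mdp bound}, is that first step: one cannot write the difference as $\textbf{r}^\top[(I-\gamma\textbf{P}^\pi)^{-1}-(I-\gamma\textbf{P}^\mu)^{-1}]\textbf{d}_0$ because the immediate-reward vector is itself policy-dependent, and the substitution $\textbf{r}^\mu=(I-\gamma\textbf{P}^\mu)\textbf{V}^\mu$ (equivalently, the performance-difference lemma) is what collapses everything into one clean advantage term. A secondary point to keep straight is the $1-\gamma$ normalization relating $\mathbb{E}_{d^\pi_{P,\gamma}}[r]$ to the resolvent form, which is handled as in the proof of Lemma~\ref{mdp bound}.
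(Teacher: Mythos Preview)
Your argument is correct, but it is not the route the paper takes. The paper does the add-and-subtract you mention parenthetically and then \emph{keeps the two pieces separate}: it bounds the reward-difference term $(\textbf{r}^\pi-\textbf{r}^\mu)^\top\frac{\textbf{d}^\pi_P}{1-\gamma}$ directly by $\frac{2R_{\max}}{1-\gamma}\mathbb{E}[\mathbb{TV}]$, and bounds the transition-difference term $(\textbf{r}^\mu)^\top[(I-\gamma\textbf{P}^\pi)^{-1}-(I-\gamma\textbf{P}^\mu)^{-1}]\textbf{d}_0$ by reusing the resolvent steps of Lemma~\ref{mdp bound} verbatim to get $\frac{2\gamma R_{\max}}{(1-\gamma)^2}\mathbb{E}[\mathbb{TV}]$; summing the two yields the stated constant. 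You instead recombine the two pieces into the single performance-difference identity $\textbf{V}^\pi-\textbf{V}^\mu=(I-\gamma\textbf{P}^\pi)^{-1}\mathbb{E}_{a\sim\pi}[A^\mu(\cdot,a)]$ and bound once via $\|A^\mu\|_\infty\le R_{\max}/(1-\gamma)$. Your route is cleaner and is the standard Kakade--Langford argument; the paper's route is more mechanical but has the virtue of literally recycling the matrix manipulations from the preceding lemma. Both land on exactly the same bound, and your observation that the two routes are algebraically the same object (before any inequality is applied) is correct.
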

\begin{proof}
We can expand the difference of the evaluations of two policies like
    \begin{align*}
        |\mathbb{E}_{d^\pi_{P,\gamma}}[ r(s,a) ] - \mathbb{E}_{d^{\mu}_{P,\gamma}}[ r(s,a) ]|=\left| \left[(\textbf{r}^\pi)^\top(I-\gamma\textbf{P}^\pi)^{-1}-(\textbf{r}^{\mu})^\top(I-\gamma\textbf{P}^{\mu})^{-1} \right]\textbf{d}_0\right|&\\
        \leq\left| (\textbf{r}^\pi-\textbf{r}^{\mu})^\top\frac{\textbf{d}^\pi_{P}}{1-\gamma}\right|+\left|(\textbf{r}^{\mu})^\top\left[(I-\gamma\textbf{P}^\pi)^{-1}-(I-\gamma\textbf{P}^{\mu})^{-1} \right]\textbf{d}_0\right|&.
    \end{align*}
The first term is bounded as
    \begin{equation}
    \label{policy bound_1}
        \left| (\textbf{r}^\pi-\textbf{r}^{\mu})^\top\frac{\textbf{d}^\pi_{P}}{1-\gamma}\right|\leq\frac{2R_{\text{max}}}{1-\gamma}\mathbb{E}_{s\sim d^\pi_{P,\gamma}}\left[\mathbb{TV}(\pi(a\vert s)\Vert \mu(a\vert s)) \right].
    \end{equation}
Using the process (\ref{using start}-\ref{using end}), the second term is bounded as
    \begin{align}
        \left|(\textbf{r}^{\mu})^\top\left[(I-\gamma\textbf{P}^\pi)^{-1}-(I-\gamma\textbf{P}^{\mu})^{-1} \right]\textbf{d}_0\right|\leq\frac{\gamma R_{\text{max}}}{(1-\gamma)^2}\left\Vert \left[\textbf{P}^\pi-\textbf{P}^{\mu}\right]\textbf{d}^\pi_{P}\right\Vert_1\\
        =\frac{\gamma R_{\text{max}}}{(1-\gamma)^2}\sum_{s^\prime}\left| \sum_{s,a}P(s^\prime\vert s,a)\left[\pi(a\vert s)-\mu(a\vert s)\right]d^\pi_{P}(s)\right|\\
        \leq \frac{\gamma R_{\text{max}}}{(1-\gamma)^2}\sum_{s,a}\left| \pi(a\vert s)-\mu(a\vert s)\right|d^\pi_{P}(s)\\
        \label{policy bound_2}
        =\frac{2\gamma R_{\text{max}}}{(1-\gamma)^2}\mathbb{E}_{s\sim d^\pi_{P,\gamma}}\left[\mathbb{TV}\left(\pi(a\vert s)\Vert\mu(a\vert s)\right) \right].
    \end{align}
Combining (\ref{policy bound_1}) and (\ref{policy bound_2}), we have the proof of Lemma \ref{policy bound}.
\end{proof}
\bound*

\begin{proof}
The error bound for imitated policy $\pi$ can be expanded as
\begin{align*}
    &\left| \mathbb{E}_{d^\pi_{P,\gamma}}[ r(s,a) ]-\mathbb{E}_{d^{\pi^E}_{P,\gamma}}[ r(s,a) ] \right| \\
    &\le \left|\mathbb{E}_{d^\pi_{P,\gamma}}[ r(s,a) ] - \mathbb{E}_{d^\pi_{P,\widehat{\gamma}}}[ r(s,a) ]\right| + \left|\mathbb{E}_{d^\pi_{P,\widehat{\gamma}}}[ r(s,a) ] - \mathbb{E}_{d^\pi_{\widehat{P},\widehat{\gamma}}}[ r(s,a) ]\right|\\
    &+\left|\mathbb{E}_{d^\pi_{\widehat{P},\widehat{\gamma}}}[ r(s,a) ] - \mathbb{E}_{d^{\pi^E}_{\widehat{P},\widehat{\gamma}}}[ r(s,a) ]\right| + \left|\mathbb{E}_{d^{\pi^E}_{P,\widehat{\gamma}}}[ r(s,a) ] - \mathbb{E}_{d^{\pi^E}_{\widehat{P},\widehat{\gamma}}}[ r(s,a) ]\right|\\
    &+ \left|\mathbb{E}_{d^{\pi^E}_{P,\widehat{\gamma}}}[ r(s,a) ] - \mathbb{E}_{d^{\pi^E}_{P,\gamma}}[ r(s,a) ]\right|.
\end{align*}
The first and last terms are bounded by Lemma \ref{gamma bound}, and the second and fourth terms are bounded by Lemma \ref{mdp bound}. Lastly, the third term is bounded by Lemma \ref{policy bound}. This completes the proof.
\end{proof}

\section{Proof for simple MDP}
\label{proof for simple MDP}
In Section \ref{4.1}, we define expert policy as 
\[\pi^E(a_1\vert s_0)=0.6,\,\pi^E(a_2\vert s_0)=0.4,\,\pi^E(a_1\vert s_1)=1\,\,\pi^E(a_1\vert g)=1.\]
we said if N trajectories are sampled, the number of $\tau_1$ and $\tau_2$ is 0.6N and 0.4N. Since each trajectories have three $(s,a)$ pairs, total number of $(s,a)$ pairs is 3N. Calculating the previously defined $E(s,a)$ is as follows
\begin{align*}
    &E(s_0,a_1)=\frac{0.6N}{3N} = \frac{1}{5},\;E(s_0,a_2)=\frac{0.4N}{3N} = \frac{2}{15}\\&E(s_1,a_1)=\frac{0.6N}{3N} = \frac{1}{5},\;E(g,a_1)=\frac{1.4N}{3N} = \frac{7}{15}
\end{align*}
Next, imitation policy is defined as
\[\pi(a_1\vert s_0)=\theta,\,\pi(a_2\vert s_0)=1-\theta,\,\pi(a_1\vert s_1)=1\,\,\pi(a_1\vert g)=1.\]
Then, $d^\pi(s,a)$ has the following values
\begin{align*}
    d^\pi(s_0,a_1)&=\pi(a_1|s_0)\left[(1-\gamma)p_0(s_0)+\gamma\sum_{\Bar{s},\Bar{a}}P(s_0\vert\Bar{s},\Bar{a})d^\pi(\Bar{s},\Bar{a})\right] = \theta(1-\gamma)\\
    d^\pi(s_0,a_2)&=\pi(a_2|s_0)\left[(1-\gamma)p_0(s_0)+\gamma\sum_{\Bar{s},\Bar{a}}P(s_0\vert\Bar{s},\Bar{a})d^\pi(\Bar{s},\Bar{a})\right] = (1-\theta)(1-\gamma)\\
    d^\pi(s_1,a_1)&=\pi(a_1|s_1)\left[(1-\gamma)p_0(s_1)+\gamma\sum_{\Bar{s},\Bar{a}}P(s_1\vert\Bar{s},\Bar{a})d^\pi(\Bar{s},\Bar{a})\right]\\
    &= 1\cdot[(1-\gamma)\cdot0+\gamma\cdot1\cdot d^\pi(s_0,a_1)]=\theta\gamma(1-\gamma)\\
    d^\pi(g,a_1)&=\pi(a_1|g)\left[(1-\gamma)p_0(g)+\gamma\sum_{\Bar{s},\Bar{a}}P(g\vert\Bar{s},\Bar{a})d^\pi(\Bar{s},\Bar{a})\right]\\ &= 1\cdot[0+\gamma(1\cdot d^\pi(s_0,a_2)+1\cdot d^\pi(s_1,a_1)+1\cdot d^\pi(g,a_1))]\\
    &=(1-\theta)\gamma(1-\gamma)+\theta\gamma^2(1-\gamma)+\gamma d^\pi(g,a_1)\\
    &\rightarrow\,d^\pi(g,a_1) = (1-\theta)\gamma+\theta\gamma^2
\end{align*}
Using E(s,a) and $d^\pi(s,a)$, $D_{KL}(d^\pi\Vert E)$ is expressed as 
\begin{align*}
    \min_\pi\; &D_{KL}(d^\pi(s,a)\Vert E(s,a))\\&=d^\pi(s_0,a_1)\log\frac{d^\pi(s_0,a_1)}{E(s_0,a_1)}+d^\pi(s_0,a_2)\log\frac{d^\pi(s_0,a_2)}{E(s_0,a_2)}\\&+d^\pi(s_1,a_1)\log\frac{d^\pi(s_1,a_1)}{E(s_1,a_1)}+d^\pi(g,a_1)\log\frac{d^\pi(g,a_1)}{E(g,a_1)}
\end{align*}
\begin{align*}
     \min_\theta\; &D_{KL}(d^\pi(s,a)\Vert E(s,a))\\&=\theta(1-\gamma)\log\frac{\theta(1-\gamma)}{1/5}+(1-\theta)(1-\gamma)\log\frac{(1-\theta)(1-\gamma)}{2/15}\\&+\theta\gamma(1-\gamma)\log\frac{\theta\gamma(1-\gamma)}{1/5}+\left((1-\theta)\gamma+\theta\gamma^2\right)\log\frac{(1-\theta)\gamma+\theta\gamma^2}{7/15}
\end{align*}
To find a minimizer $\theta^*$ of above optimization, we use first-order optimality condition. That is,
\[\frac{\partial D_{KL}(d^\pi(s,a)\Vert E(s,a))}{\partial\theta} = 0.\]
We derive first derivative of $D_{KL}(d^\pi(s,a)\Vert E(s,a))$. Result is
\begin{align*}
\begin{split}
    \frac{\partial D_{KL}(d^\pi(s,a)\Vert E(s,a))}{\partial\theta} &=(1-\gamma)\left[(1+\gamma)\log\theta-\log(1-\theta)+\log\frac{2}{3}\right]\\ &+\gamma(1-\gamma)\left[\log\frac{7\gamma(1-\gamma)}{3}-\log((1-\theta)\gamma+\theta\gamma^2)\right]=0
\end{split}
\end{align*}
It is hard to get the exact $\theta^*$ value because equation is very complicated. Therefore, we substitute the probability value $\frac{3}{5}$, which is probability of expert policy, into $\theta$ to check whether the derivative becomes 0.
\begin{align*}
\begin{split}
    &\frac{\partial D_{KL}(d^\pi(s,a)\Vert E(s,a))}{\partial\theta}\\& = (1-\gamma)\left[(1+\gamma)\log\frac{3}{5}-\log(1-\frac{3}{5})+\log\frac{2}{3}\right]\\&+\gamma(1+\gamma)\left[\log\frac{7\gamma(1-\gamma)}{3}-\log((1-\frac{3}{5})\gamma+\frac{3}{5}\gamma^2)\right]=0\\
    &\rightarrow \gamma\log\frac{3}{5}+\gamma\left[\log\frac{7\gamma(1-\gamma)}{3}-\log\left(\frac{2}{5}\gamma+\frac{3}{5}\gamma^2\right)\right]=0\\
    &\rightarrow \log\frac{3}{5}+\log\frac{7\gamma(1-\gamma)}{3}-\log\left(\frac{2}{5}\gamma+\frac{3}{5}\gamma^2\right)=\log\left(\frac{7(1-\gamma)}{2+3\gamma}\right)=0\\
    &\rightarrow 7(1-\gamma)=2+3\gamma
\end{split}
\end{align*}
Since $\gamma\in[0,1)$, we exclude $\gamma=1$ case. Solving last equation yields $\gamma=0.5$. In summary, if $\gamma=0.5$, the expert policy can be recovered through optimization $\min_\pi D_{KL}(d^\pi(s,a)\Vert E(s,a))$, but if other $\gamma$ values are applied, it can be interpreted that the optimal policy is not the same as the expert policy. 

\section{Finding IGI Distribution}
\label{IGI}
Our goal is to find the initial distribution $\tilde{p}_0$ that makes the distribution made by $t_0+t_{geom}$ uniform, where $t_0\sim\tilde{p}_0(t)$ and $t_{geom}\sim \text{Geom}_{1-\gamma}(t\vert t_0)$. Note that $\text{Geom}_{1-\gamma}(t\vert t_0)$ is conditional distribution. Because, in general, there is a maximum timestep, $|T|$, for episode in the learning environment, so in order to prevent the sum of timestep exceeds over $|T|$, $\text{Geom}_{1-\gamma}(t\vert t_0)$ must be a conditional distribution that considers the sampled initial timestep.
Let $T=t_0+t_{geom}$, Then $P_T(T)$ is defined as
\begin{align}
\begin{split}
\label{C.1}
    P_T(T) &= \sum^T_{t_0=0}\text{Geom}_{1-\gamma}(T-t_0|t_0)\tilde{p}_0(t_0)\\
    &= \sum^T_{t_0=0}\frac{(1-\gamma)\gamma^{T-t_0}}{\sum_{i=0}^{|T|-t_0}(1-\gamma)\gamma^i}\tilde{p}_0(t_0)\\
    &= \sum^T_{t_0=0}\frac{(1-\gamma)\gamma^{T-t_0}}{Sum(|T|-t_0)}\tilde{p}_0(t_0).
\end{split}
\end{align}
For the convenience of notation, we rewrite $\sum_{i=0}^{|T|-t_0}(1-\gamma)\gamma^i$ as $Sum(|T|-t_0)$. We can express Equation (\ref{C.1}) in linear problem form. To this end, it is represented as follows.
\[\underbrace{
\begin{bmatrix}
P_T(0)\\\\\\
\vdots\\\\\\
P_T(|T|)
\end{bmatrix}}_{P_T}
=\underbrace{
\begin{bmatrix}
\frac{1-\gamma}{Sum(|T|)} & 0 & 0 & \cdots & 0 \\
\frac{(1-\gamma)\gamma}{Sum(|T|)} & \frac{1-\gamma}{Sum(|T|-1)} & 0 & \cdots & 0 \\
& \frac{(1-\gamma)\gamma}{Sum(|T|-1)} & \frac{1-\gamma}{Sum(|T|-2)} & \cdots & 0 \\
\vdots & \vdots & \vdots & \ddots & \vdots \\
& &  & \\
\frac{(1-\gamma)\gamma^{|T|}}{Sum(|T|)} & \frac{(1-\gamma)\gamma^{|T|-1}}{Sum(|T|-1)} & \frac{(1-\gamma)\gamma^{|T|-2}}{Sum(|T|-2)} & \cdots & 1-\gamma\;
\end{bmatrix}}_{P_{t_{geom}}}
\underbrace{\begin{bmatrix}
\tilde{p}_0(0)\\\\\\
\vdots\\\\\\
\tilde{p}_0(|T|)
\end{bmatrix}}_{\tilde{p}_0}
\]
In our implementation, we set $P_T(T) = \frac{\textrm{The number of data with timestep T in dataset}}{\textrm{Total number of data in dataset}}$. Finally, solving $\tilde{p}_0 = P^{-1}_{t_{geom}}\cdot P_T$ gives us the $\tilde{p}_0(t)$ we want.
\section{Experimental Details}
\label{experiment details}
\subsection{Algorithm details}
In this subsection, we provide a simple pseudocode for our algorithm using IGI in Algorithm \ref{algorithm 1}.
\begin{algorithm}
\begin{algorithmic}
\caption{Our algorithm using IGI}
\Require dataset $\mathcal{D}^D=\mathcal{D}^E\cup \mathcal{D}^O$, IGI distribution $\tilde{p}_0$ from Appendix \ref{IGI}, policy network $\pi_\theta$ with parameter $\theta$, neural networks $\nu_\phi$ and $c_\psi$ with parameter $\phi$ and $\psi$, discount factor $\gamma$, batch size $B$, learning rate $\alpha$
\Ensure $P_{T_\mathcal{D}}$ is a distribution $P_T$ from Appendix \ref{IGI} using dataset $\mathcal{D}$, and $\mathcal{D}_t$ is an dataset $\mathcal{D}$ consisted of data with timestep $t$\\

\State Make IGI distribution $\tilde{p}_0$ using $\gamma$ and $P_{T_{\mathcal{D}^D}}$

\While{total iterations}
    \State Sample initial timestep: $\{t_0^{(i)}\}^{B}_{i=1}\sim\tilde{p}_0$\Comment{using IGI distribution}
    \State Sample uniform timestep: $\{t^{(i)}\}^{B}_{i=1}\sim P_{T_{\mathcal{D}^D}}$
    \State Sample expert timestep: $\{t_e^{(i)}\}^{B}_{i=1}\sim P_{T_{\mathcal{D}^E}}$\\
    
    \State Sample initial state: $\{s_0^{(i)}\}\sim \mathcal{D}^D_{t_0^{(i)}}$, for $i=1,...,B$
    \State Sample total data: $\{(s^{(i)}, a^{(i)}, {s^\prime}^{(i)})\}\sim \mathcal{D}^D_{t^{(i)}}$, for $i=1,...,B$
    \State Sample expert data: $\{(s_e^{(i)}, a_e^{(i)}, {s_e^\prime}^{(i)})\}\sim \mathcal{D}^E_{t_e^{(i)}}$, for $i=1,...B$\\
    
    \State Compute discriminator loss $J_c$ (\ref{3.2_11}) using $\{(s^{(i)},a^{(i)})\}^{B}_{i=1}$ and $\{(s^{(i)}_e,a^{(i)}_e)\}^{B}_{i=1}$
    \State Compute critic loss $J_\nu$ (\ref{3.2_9}) using $\{s^{(i)}_0\}^{B}_{i=1}$ and $\{(s^{(i)}, a^{(i)}, {s^\prime}^{(i)})\}^{B}_{i=1}$
    \State Compute policy loss $J_\pi$ (\ref{3.2_10}) using $\{(s^{(i)}, a^{(i)}, {s^\prime}^{(i)})\}^{B}_{i=1}$\\
    
    \State Update $\psi\leftarrow\psi-\alpha\nabla_\psi J_c$
    \State Update $\phi \leftarrow \phi-\alpha\nabla_\phi J_\nu$
    \State Update $\theta \leftarrow \theta-\alpha\nabla_\theta J_\pi$
\EndWhile
\label{algorithm 1}
\end{algorithmic}
\end{algorithm}
To compute three loss functions, we need sampled data from $d^E, D,$ and $p_0$. This begins by sampling the timesteps for each distribution. First of all, compute the IGI distribution using the geometric distribution $\text{Geom}(1-\gamma$) given a discount factor $\gamma$ as we described in Appendix \ref{IGI}. At this time, uniform distribution $P_T$ for the total dataset $D^D$ is also used for practical performance as shown in Figure \ref{irucompare} of Section \ref{4.2}. 

$P_T$ is defined as $\frac{\textrm{The number of data with timestep T in dataset}}{\textrm{Total number of data in dataset}}$ as in Appendix \ref{IGI}. Since IGI enables sampling for the discounted distribution uniformly regardless of the timestep, sampling from $d^E$ is replaced with sampling from $P_{T_{D^E}}$. After three types of timesteps are sampled, we need actual data corresponding to each type of timestep to use in loss functions. We uniformly sample actual data of corresponding timesteps in each three dataset. To compute the loss function (\ref{3.2_11}) of the discriminator, uniformly sampled expert data is used to compute the first term and the total data is used to compute the second term. In the case of the critic network, the initial data from IGI is used to compute the first term in (\ref{3.2_9}) and the total data is used to compute the second term. When we compute $A_\nu(s,a)$ in the second term, use $\gamma\nu({s^\prime}^{(i)})$ instead of $\gamma\mathcal{P}\nu(s^{(i)})$ and use the discriminator to compute $\log{\frac{d^E(s,a)}{D(s,a)}}$ as notified in the sentence below (\ref{3.2_11}). For the policy network, total data is used to compute (\ref{3.2_10}). Lastly, update the parameters of each network with a learning rate $\alpha$. Repeating this process for the total number of iterations.
\subsection{Implementation detail}
In this subsection, we provide the hyperparameter settings of our algorithm in Table \ref{hyperparameter table}. We use an absorbing state for practical implementation~\cite{kostrikov2018discriminator}. For discriminator learning, we use WGAN-GP~\cite{gulrajani2017improved} to achieve more robust learning. 
\begin{table}[htp!]
\caption{Hyperparameter settings}
\centering
\renewcommand{\arraystretch}{1.5}
\begin{tabular}{c|c}
\hline
\textbf{Hyperparameters}                   & \textbf{Our setting}                             \\ \hline
Actor learning rate                        & $3\times10^{-4}$ \\
Actor network size                         & $[256, 256]$                                   \\ \hline
Critic learning rate                       & $3\times10^{-4}$ \\
Critic network size                        & $[256, 256]$                                   \\
Critic gradient L2-norm coefﬁcient         & $10^{-4}$                       \\ \hline
Discriminator learning rate                & $3\times10^{-4}$ \\
Discriminator network size                 & $[256, 256]$                                  \\
Discriminator gradient  penalty coefﬁcient & 10                                               \\ \hline
Batch size                                 & 256                                              \\ \hline
Number of total iteration                  & $10^6$                       \\ \hline
Random seeds                               & 1, 2, 3                      \\ \hline
Kernel initializer                         & He normal initializer   \\ \hline
\end{tabular}
\label{hyperparameter table}
\end{table}

\newpage
\section{Experiment Results}
\subsection{Finite and Discrete MDP}
\label{discrete result}
In this subsection, we provide experiment details in finite-discrete MDP and its results.
\begin{figure}[H]
    \centering
    \includegraphics[width=\textwidth]{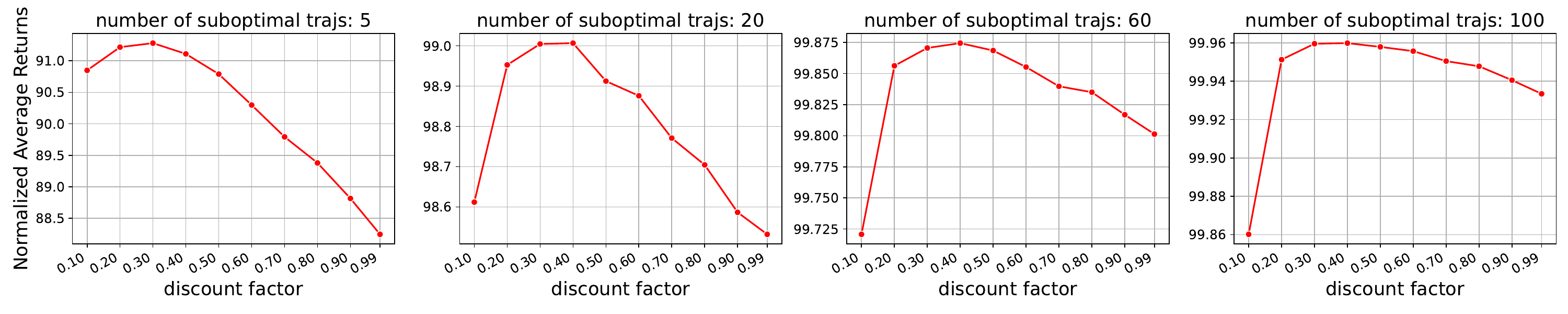}
    \caption{The trend of evaluation result with various discount factors in different settings. We plot the normalized average evaluation over 1000 random seeds.}
    \label{allgamma with different task}
\end{figure}
In finite and discrete MDP, we empirically show the trade-off effect by the discount factor in the Random MDP. This environment generates the finite and discrete MDP randomly. We follow the environment configuration of ~\cite{lee2021optidice}.

We construct the expert dataset $\mathcal{D}^E$ and suboptimal dataset $\mathcal{D}^O$ by rolling expert policy and suboptimal policy, respectively. We characterized the expert policy as the stochastic policy based on the optimal $Q$ (state-action) value of the randomly generated MDP. The suboptimal policy has performance between optimal and uniformly random policy by controlling the hyperparameter $\omega\in[0,1]$ as $\omega V^*(s_0)+(1-\omega)V^{\pi_{\text{unif}}}(s_0)$ where $V^*$ and $V^{\pi_{\text{unif}}}$ is the value function of optimal policy and uniformly random policy, respectively. Since the MDP is finite and discrete, we can compute the discounted visitation distribution of the policy directly, which means learning the discriminator is unnecessary. Thus, there is no need to use IGI in this environment. We use Maximum likelihood estimation (MLE) transition dynamics $\hat{P}$ and discount factor $\hat{\gamma}$ for training the policy as we discussed in Theorem \ref{theorem1}. We use discount factor for $\hat{\gamma}\in\{0.99, 0.9, 0.8, 0.7, 0.6, 0.5, 0.4, 0.3, 0.2, 0.1\}$ with varying the number of suboptimal trajectories for $\{5, 20, 60, 100\}$. The number of expert trajectory is 1 for all settings. Figure \ref{allgamma with different task} shows the policy evaluation result on the true MDP with true discount factor $\gamma$. It shows evaluation trends so that the trade-off effect by discount factor suggested in Section \ref{trade off by discount factor} can be confirmed. We can see that the performance can be optimized by choosing lower $\gamma$. Furthermore, the figure shows that the performance of using large discount factors becomes better as the number of suboptimal trajectories gets larger.
\subsection{Continuous MDP}
\label{continuous result}
We plot the learning curve according to $\gamma$ and dataset ratio. Figure \ref{D.1} represents the result in \texttt{HalfCheetah-v2} environment, and Figure \ref{D.2}, \ref{D.3}, \ref{D.4} is the result on the \texttt{Hopper-v2}, \texttt{Walker2d-v2} and \texttt{Ant-v2}, respectively.
\begin{figure}
    \centering
    \includegraphics[width=\textwidth]{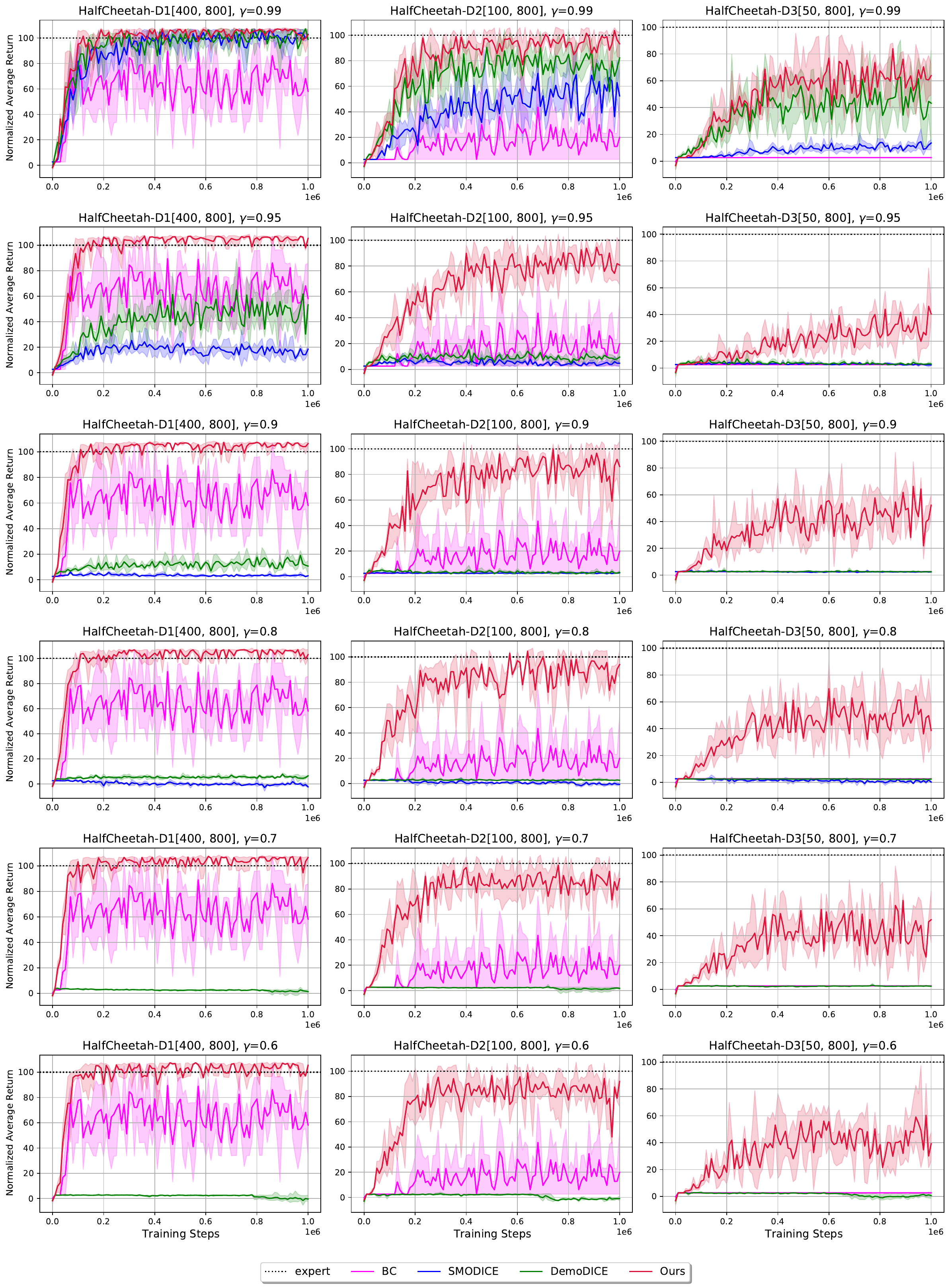}
    \caption{Performance of our algorithm using IGI and other baseline algorithms with $\gamma\in\{0.99, 0.95, 0.9, 0.8, 0.7, 0.6\}$ and dataset ratio D1, D2 and D3 in \texttt{HalfCheetah-v2} environment.}
    \label{D.1}
\end{figure}
\newpage
\begin{figure}[H]
    \centering
    \includegraphics[width=\textwidth]{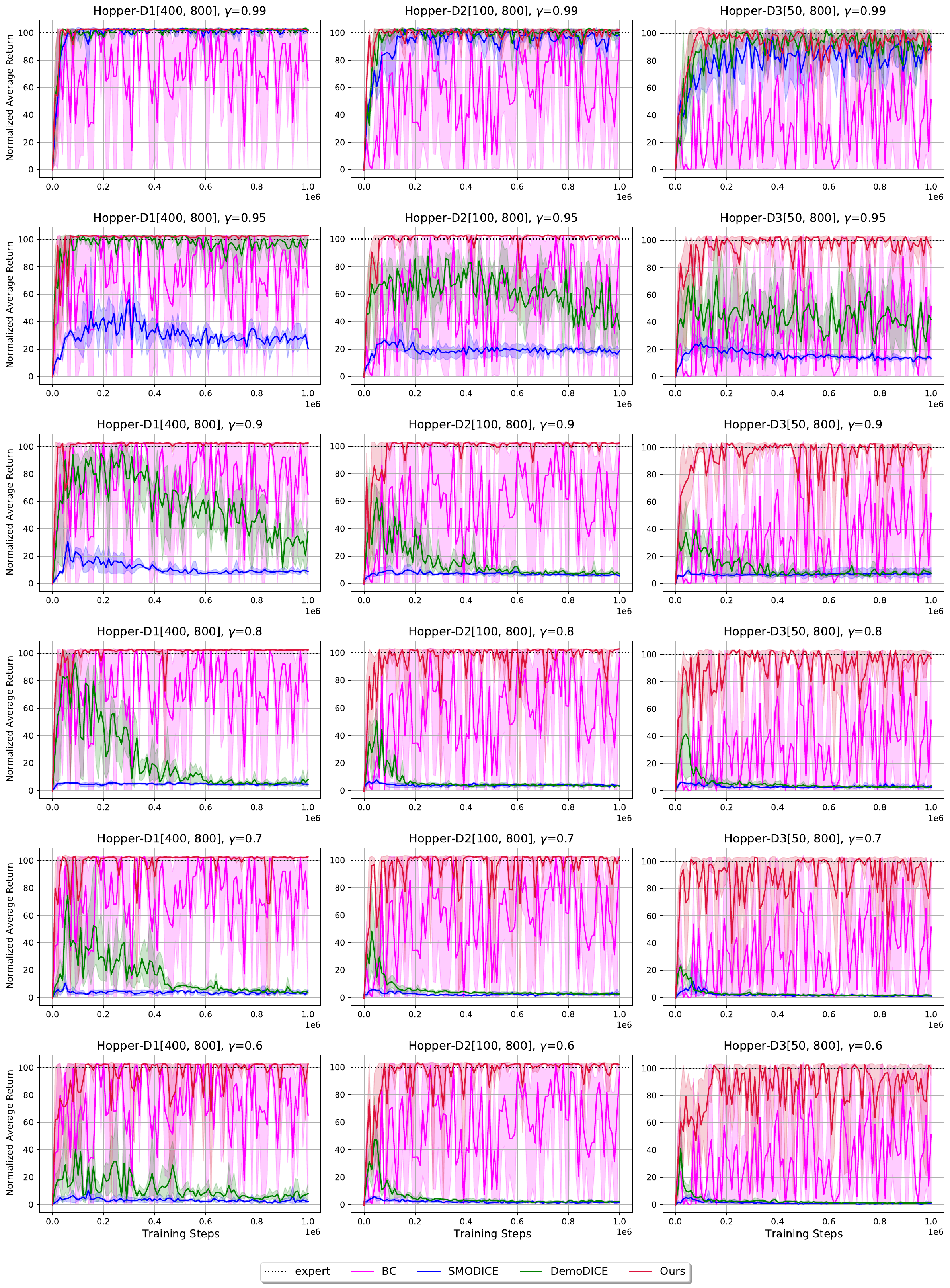}
    \caption{Performance of our algorithm using IGI and other baseline algorithms with $\gamma\in\{0.99, 0.95, 0.9, 0.8, 0.7, 0.6\}$ and dataset ratio D1, D2 and D3 in \texttt{Hopper-v2} environment.}
    \label{D.2}
\end{figure}
\newpage
\begin{figure}[H]
    \centering
    \includegraphics[width=\textwidth]{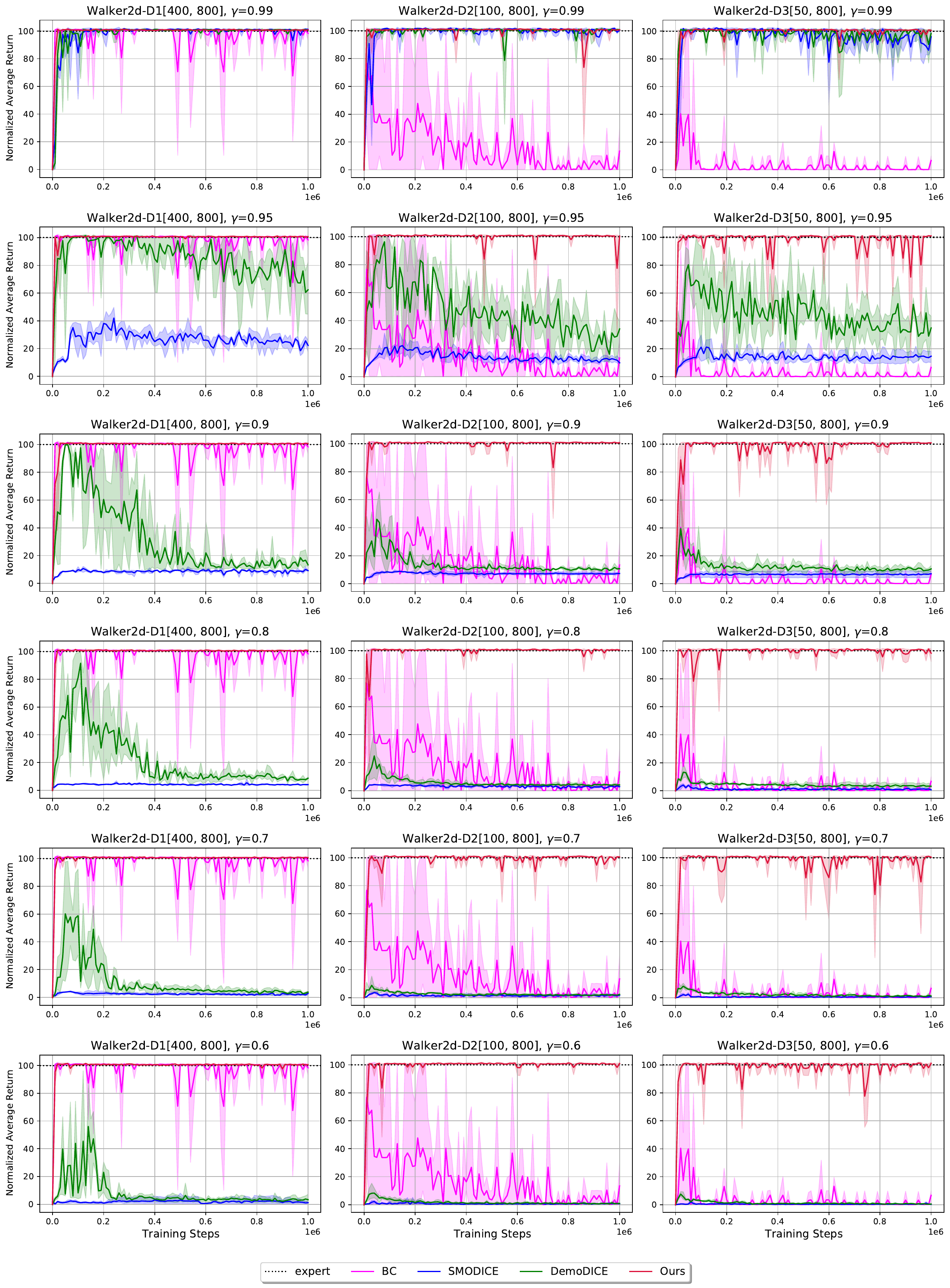}
    \caption{Performance of our algorithm using IGI and other baseline algorithms with $\gamma\in\{0.99, 0.95, 0.9, 0.8, 0.7, 0.6\}$ and dataset ratio D1, D2 and D3 in \texttt{Walker2d-v2} environment.}
    \label{D.3}
\end{figure}
\newpage
\begin{figure}[H]
    \centering
    \includegraphics[width=\textwidth]{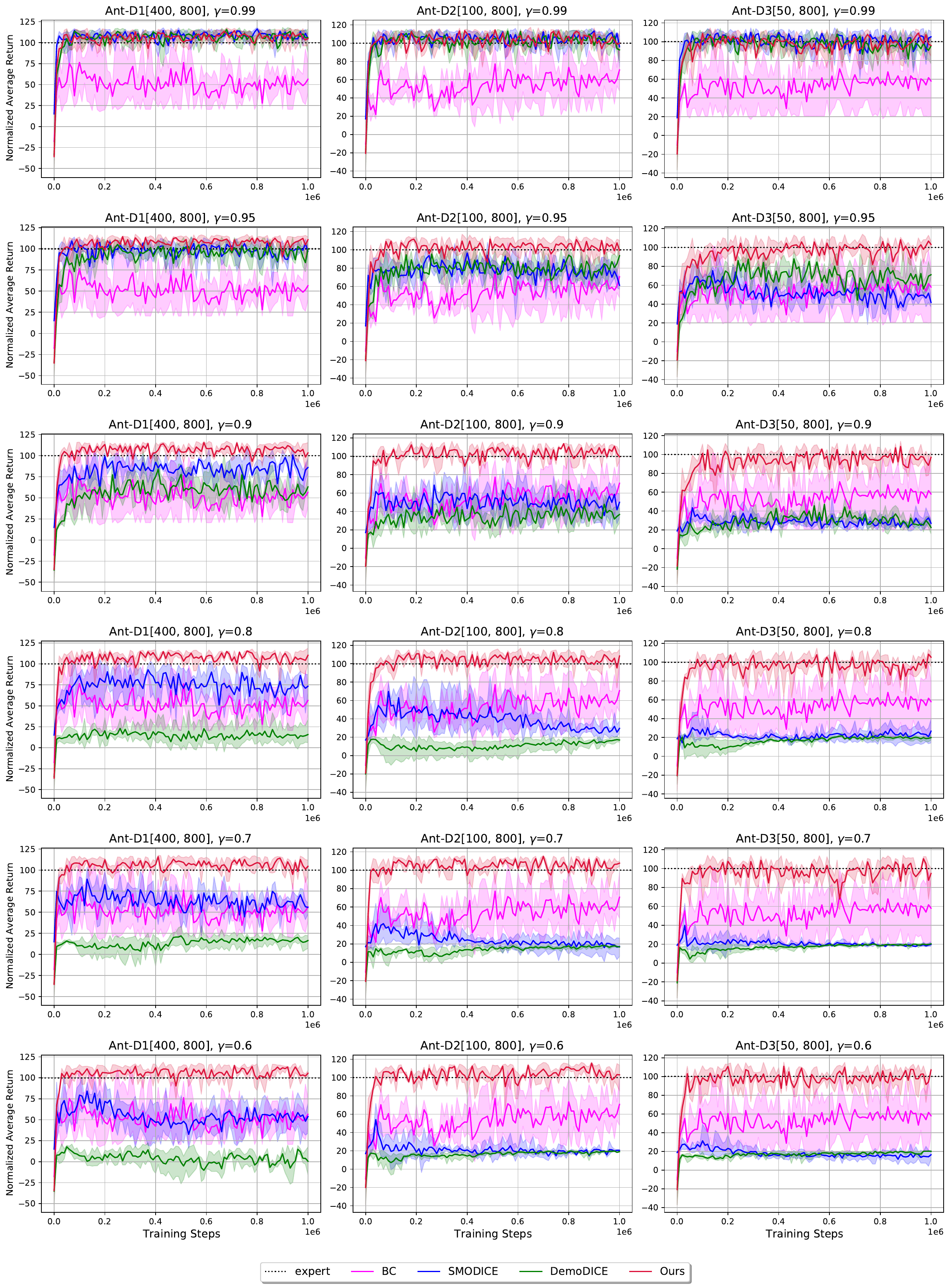}
    \caption{Performance of our algorithm using IGI and other baseline algorithms with $\gamma\in\{0.99, 0.95, 0.9, 0.8, 0.7, 0.6\}$ and dataset ratio D1, D2 and D3 in \texttt{Ant-v2} environment.}
    \label{D.4}
\end{figure}
\newpage
\section{Addtional Experiment Result}
\label{additional experiment}
\subsection{Experiment on Small Dataset in Continuous MDP}
\label{4080}
\begin{figure}[H]
    \centering
    \includegraphics[width=\textwidth]{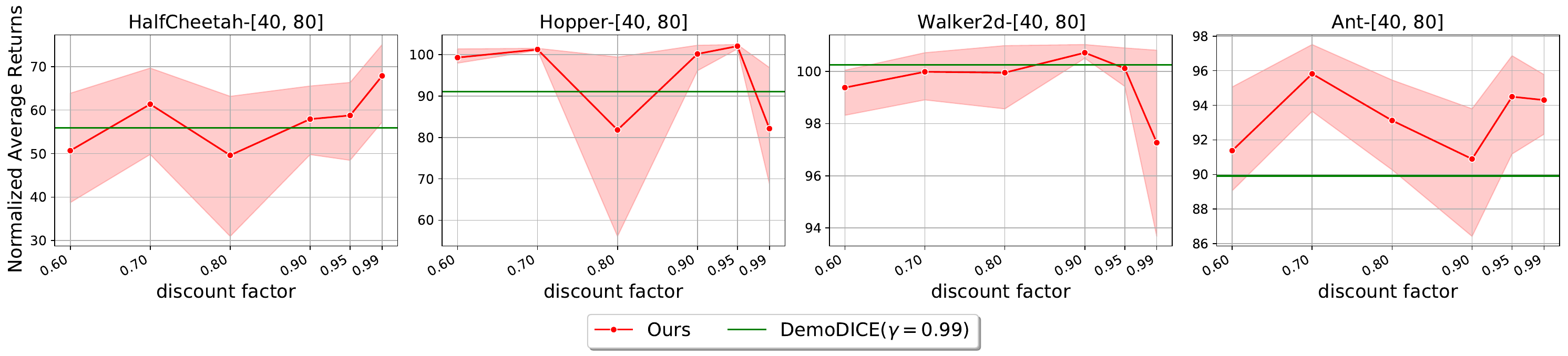}
    \caption{The trend of the final evaluation over 3 random seeds with various discount factors in different environment. For comparison, we plot green line which represents the normalized average returns of DemoDICE ($\gamma=0.99$).}
    \label{allgamma with 4080}
\end{figure}
We make additional experiments by lowering the amount of dataset about 10 times less (40 expert trajectories and 80 random-policy trajectories) than D1 / D2 / D3 settings. Figure \ref{allgamma with 4080} shows the last evaluation of IGI with applying aforementioned setting in 4 MuJoCo environments over 3 random seeds, and green line shown in the Figure \ref{allgamma with 4080} is the maximum last evaluation over 3 seed of DemoDICE among different discount factors for each environment (Note that best discount factor of DemoDICE is 0.99 for all environments).

As shown in the attached figure, we confirmed that the optimal discount factor appeared at lower than 0.99, except for the \texttt{HalfCheetah-v2} environment. To explain the reason why a different trend came out only in \texttt{HalfCheetah-v2}, we inform that unlike the other environments, \texttt{HalfCheetah-v2} environment has the peculiarity that learning is stable in the offline RL problem, even when the policy is unregularized. For example, refer to the experiment result by varying the policy regularization factor in Figure 9 of \cite{DBLP:journals/corr/abs-1911-11361}, \texttt{HalfCheetah-v2} environment shows good performance compared to other environments even when a low regularization factor $\alpha$ is applied. Combining these experimental results with the regularization effect on the discount factor in the offline RL referred to \cite{hu2022role} (Section 3), we can expect that the performance of the learned policy can be improved when a high discount factor is used in the \texttt{HalfCheetah-v2} environment. Eventually, the reason why Figure \ref{allgamma with 4080} shows a different trend only in the \texttt{HalfCheetah-v2} environment is interpreted as reflecting this environmental disposition, and the error bound that we proposed in Theorem \ref{theorem1} is valid under general situations.
\newpage
\subsection{Test on validity of Theorem \ref{theorem1}}
\label{F.2}
\begin{figure}[H]
    \centering
    \includegraphics[width=\textwidth]{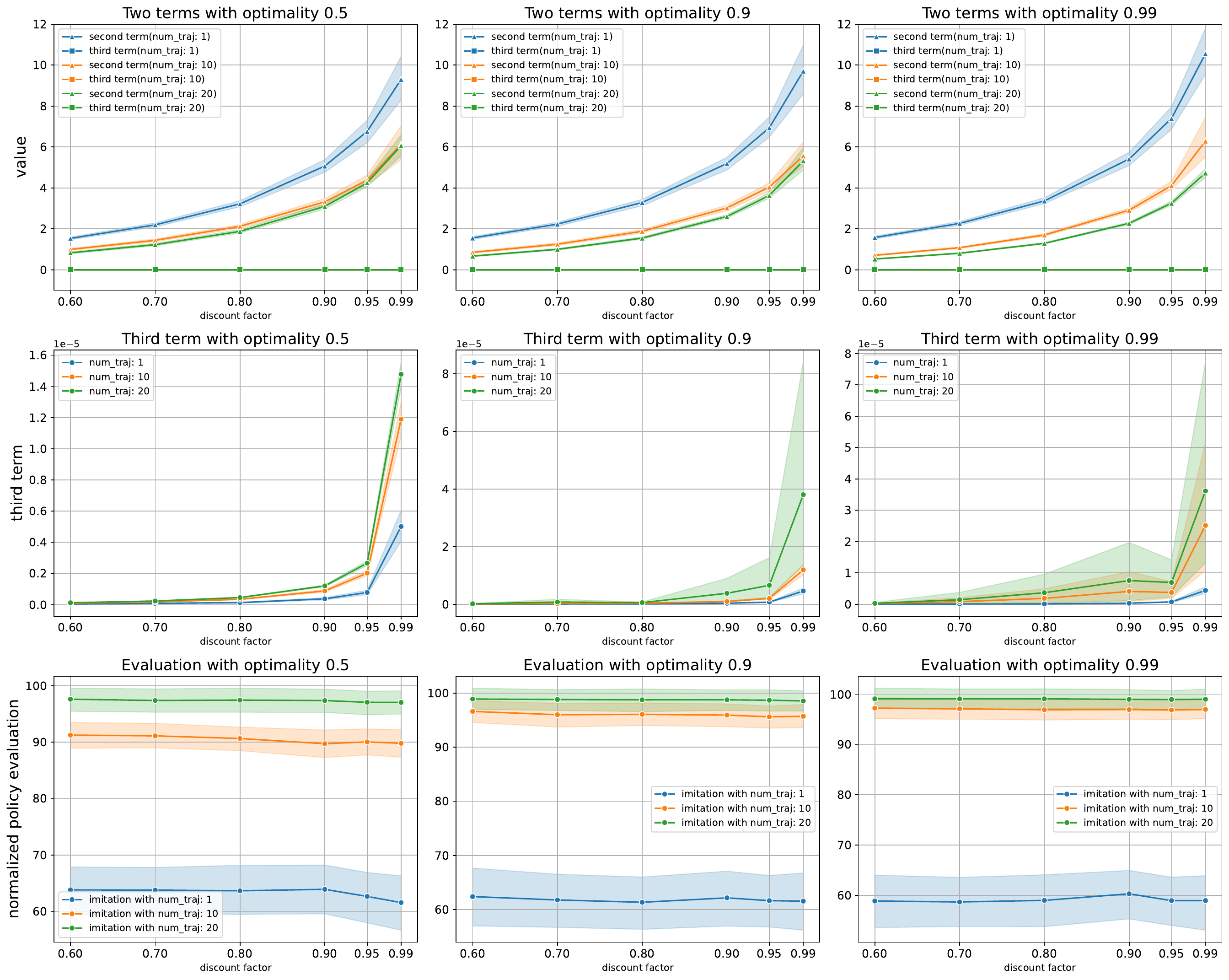}
    \caption{Figures in the first row show the magnitude of the second and third terms averaged over 100 random seeds with different discount factors and the number of suboptimal trajectories (num\_traj) for each optimality setting. The second row shows only the magnitude of the third term, and the last row represents the performance of imitated policies.}
    \label{two terms}
\end{figure}

To check the validity of Theorem \ref{theorem1}, we measured the magnitude of the second and third terms in finite-discrete MDP described in \ref{discrete result} where $\epsilon_P$ and $\epsilon_\pi$ can be exactly computed. Figure \ref{two terms} shows the trend of two terms in (\ref{theorem1_1}) with different discount factors at each setting. Results show that the third term has a much lower value than the second term. As a result, Theorem \ref{theorem1} is mainly dependent on the first and second terms.

The peculiar part is that the third term increases as the suboptimal data increases. In this situation, the proportion of expert data in the total dataset is reduced, and in that case, the state-marginal distribution, $d_{\widehat{P}, \hat{\gamma}}^\pi(s)$ generated by MLE MDP will give unnecessary amount of probability for the suboptimal data. Therefore, it is interpreted that the third term is increased because samples with high $D_{TV}(\pi\Vert\pi^E)$ are more frequently reflected in $\epsilon_\pi$ as suboptimal data increases.
\subsection{Performance on Different Suboptimal Dataset}
\begin{figure}[H]
    \centering
    \includegraphics[width=\textwidth]{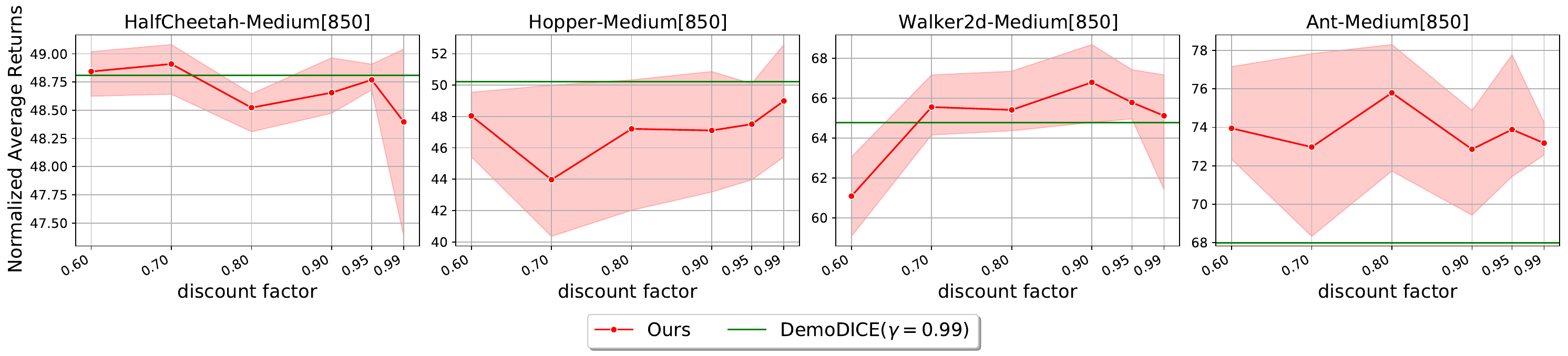}
    \caption{Last evaluation of IGI and best among DemoDICE when suboptimal dataset $\mathcal{D}^O$ is consisted of 850 medium-level trajectories. The gray line represents the best of DemoDICE 
 ($\gamma=0.99$).}
    \label{medium_last}
\end{figure}
We additionally conducted an experiment using medium-level trajectory in the D4RL dataset as suboptimal data. Figure \ref{medium_last} shows the evaluations of learning 1 million steps according to the discount factor when the suboptimal dataset is consisted of 850 medium-level trajectories, and the gray line represents DemoDICE's best performance. As a result, in an environment other than \texttt{Hopper-v2} it was possible to obtain a higher return than DemoDICE. We also included the learning curve of this experiment in the next page, Figure \ref{medium}. Here, we can see that the IGI still shows robust learning against the discount factor.
\newpage
\begin{figure}[H]
    \centering
    \includegraphics[width=\textwidth]{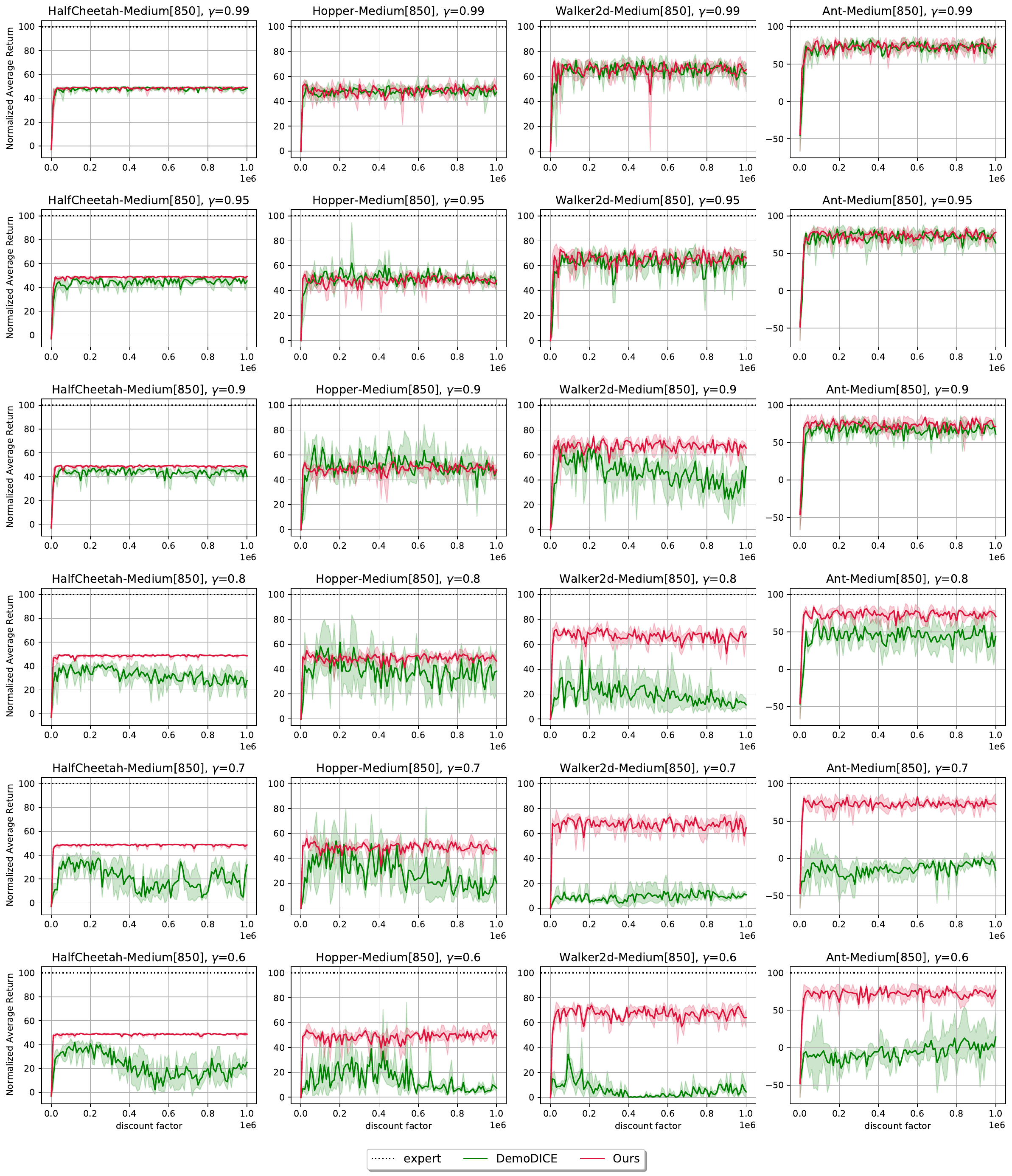}
    \caption{Performance of IGI and DemoDICE when suboptimal dataset $\mathcal{D}^O$ is made by 850 medium-level trajectories. Each column shows the normalized return of two algorithms according to the discount factor in the same environment.}
    \label{medium}
\end{figure}
\end{document}